\pdfoutput=1

\documentclass{article}

\usepackage{microtype}
\usepackage{graphicx}
\usepackage{subcaption}
\usepackage{booktabs} % for professional tables

\usepackage{hyperref}

\newcommand{\methodname}[1]{#1}

\usepackage[accepted]{icml2026}

\usepackage{amsmath}
\usepackage{amssymb}
\usepackage{mathtools}
\usepackage{amsthm}
\usepackage[most]{tcolorbox}

\usepackage[capitalize,noabbrev]{cleveref}

\tcbset{
  theorembox/.style={
    colback=gray!10,
    colframe=gray!40,
    coltitle=black,
    boxrule=0.5pt,
    arc=2pt,
    left=6pt,
    right=6pt,
    top=6pt,
    bottom=6pt,
    fonttitle=\bfseries,
    before skip=10pt,
    after skip=10pt,
  }
}

\newtcbtheorem[number within=section]{theorem}{Theorem}{theorembox}{thm}
\newtcbtheorem[use counter from=theorem]{proposition}{Proposition}{theorembox}{prop}
\newtcbtheorem[use counter from=theorem]{lemma}{Lemma}{theorembox}{lem}
\newtcbtheorem[use counter from=theorem]{corollary}{Corollary}{theorembox}{cor}
\newtcbtheorem[use counter from=theorem]{definition}{Definition}{theorembox}{def}
\newtcbtheorem[use counter from=theorem]{assumption}{Assumption}{theorembox}{asm}

\theoremstyle{remark}

\usepackage{enumitem}
\DeclareMathOperator*{\argmin}{arg\,min}
\DeclareMathOperator*{\argmax}{arg\,max}

\def\*#1{\mathbf{#1}}
\def\+#1{\mathcal{#1}}
\def\emp*#1{\hat{#1}_n}

\newcommand{\verif}{\text{verif}}

\usepackage[textsize=tiny]{todonotes}

\icmltitlerunning{The Art of Interrogation: Consistency Amplifies Factuality}

\begin{document}

\twocolumn[
  \icmltitle{The Art of Interrogation: Consistency Amplifies Factuality in Spatial Reasoning}

  % It is OKAY to include author information, even for blind submissions: the
  % style file will automatically remove it for you unless you've provided
  % the [accepted] option to the icml2026 package.

  % List of affiliations: The first argument should be a (short) identifier you
  % will use later to specify author affiliations Academic affiliations
  % should list Department, University, City, Region, Country Industry
  % affiliations should list Company, City, Region, Country

  % You can specify symbols, otherwise they are numbered in order. Ideally, you
  % should not use this facility. Affiliations will be numbered in order of
  % appearance and this is the preferred way.

\icmlsetsymbol{equal}{*}
\icmlsetsymbol{intern}{$\star$}

\begin{icmlauthorlist}
\icmlauthor{Théo Uscidda}{ensae,intern}
\icmlauthor{Marta Tintore Gazulla}{google}
\icmlauthor{Maks Ovsjanikov}{gdm}
\icmlauthor{Federico Tombari}{google}
\icmlauthor{Leonidas Guibas}{gdm}
\end{icmlauthorlist}

\icmlaffiliation{ensae}{CREST, ENSAE, 
Institut Polytechnique de Paris}
\icmlaffiliation{google}{Google Zurich}
\icmlaffiliation{gdm}{Google DeepMind}

\icmlcorrespondingauthor{Théo Uscidda}{theou@google.com}

% You may provide any keywords that you
% find helpful for describing your paper; these are used to populate
% the "keywords" metadata in the PDF but will not be shown in the document
\icmlkeywords{Machine Learning, ICML}

\vskip 0.3in
]

% this must go after the closing bracket ] following \twocolumn[ ...

% This command actually creates the footnote in the first column listing the
% affiliations and the copyright notice. The command takes one argument, which
% is text to display at the start of the footnote. The \icmlEqualContribution
% command is standard text for equal contribution. Remove it (just {}) if you
% do not need this facility.

% Use ONE of the following lines. DO NOT remove the command.
% If you have no special notice, KEEP empty braces:
\printAffiliationsAndNotice{$^{\star}$Work done during an internship at Google.}
% Or, if applicable, use the standard equal contribution text:
% \printAffiliationsAndNotice{\icmlEqualContribution}

\begin{abstract}
Current Large Reasoning Models (LRMs) exhibit remarkable general capabilities but significantly underperform in spatial reasoning tasks. Existing approaches treat this gap as a knowledge deficit, relying on supervised fine-tuning (SFT) to ingest labeled spatial data from external vision sources or synthetic engines. In contrast, we argue that for many tasks, spatial reasoning capabilities are already present in pre-trained LRMs but require alignment through logical coherence under geometric 2D and 3D constraints. In this work, we propose a self-supervised reinforcement learning (RL) framework that targets the internal reasoning process without requiring ground-truth annotations. By formalizing the notion of consistency verifiers --- reward functions that check for geometric and semantic consistency under transformations --- we demonstrate that models can improve their spatial reasoning abilities. We use both image transformations, like flipping, and textual transformations, like swapping the order of objects in the question, and propose a new optimal transport-based RL strategy, OT-GRPO, which is a minimal-matching variant of group relative policy optimization tailored to pairwise verifiers. We show that this label-free consistency training approaches the accuracy of models trained with ground-truth supervision and achieves similar generalization across diverse tasks and data domains.
\end{abstract}

\section{Introduction}
\label{sec:intro}

Spatial reasoning remains a weakness for large reasoning models (LRMs). Recent benchmarks quantify the gap: models underperform humans by 30--40\% on tasks such as relative position, depth ordering, and size comparison~\citep{stogiannidis2025mind,yu2025far,cai2025holistic,chen2025space10}, and on harder compositional tasks only achieve near-random performance~\citep{thrush2022winoground,jiang2025marble}. Failures manifest as high answer variance, sensitivity to question phrasing~\citep{zhang2024unveiling}, and systematic violations of geometric laws --- for example, a model may answer that object A is to the left of B, and also that B is to the left of A. Reliable spatial reasoning matters for downstream 3D applications in robotics, navigation, and embodied AI, where such inconsistencies are unacceptable.

\begin{figure}[t]
\centering
\includegraphics[width=1.02\columnwidth]{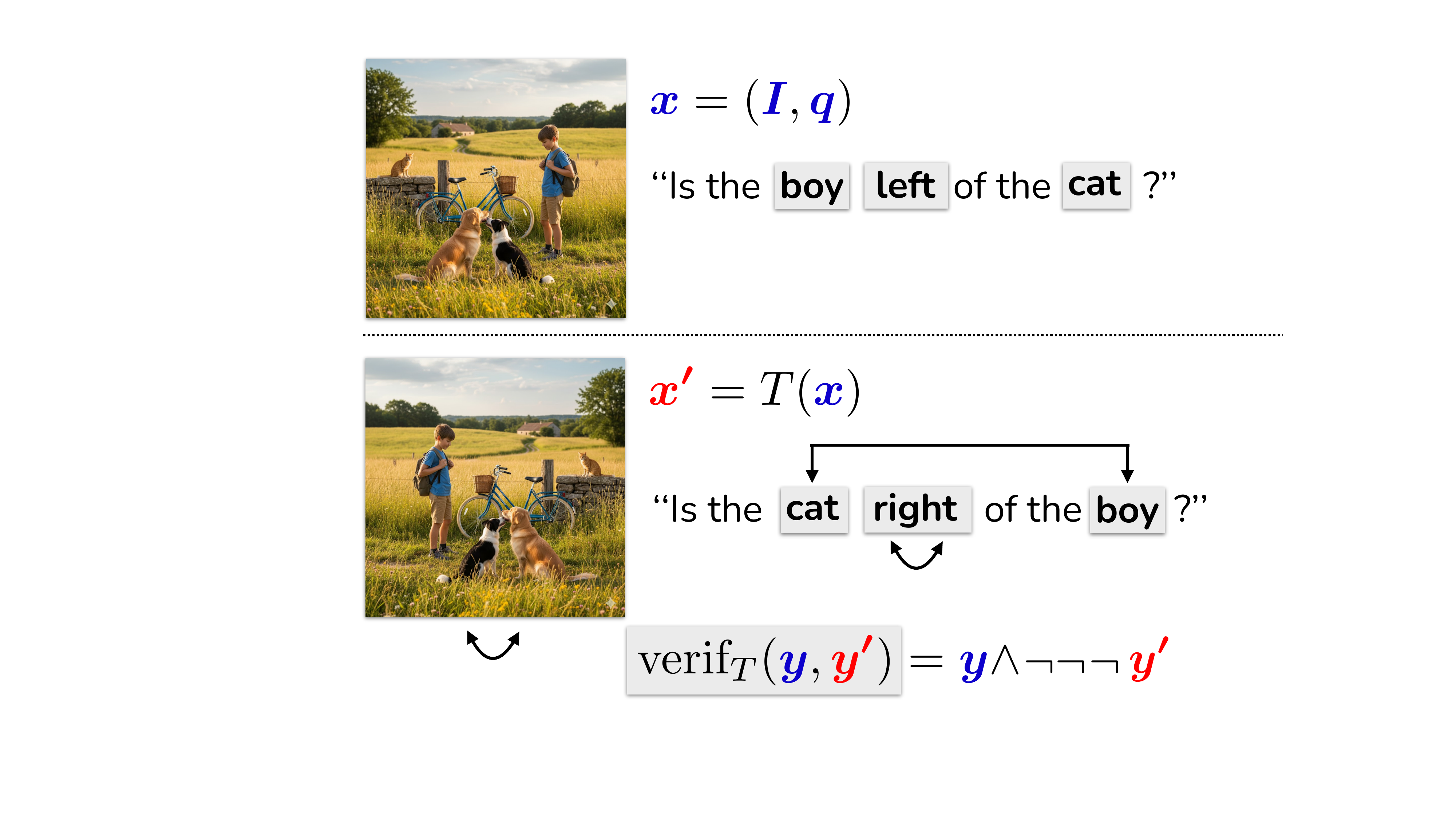}
\caption{Example of consistency verifier. Given a prompt asking whether object A is left of object B, we apply transformations (horizontal flip on the image, reformulation on the question) to create an augmented prompt. The consistency verifier checks whether the model's answers satisfy the expected relationship --- here, disagreement --- without requiring ground-truth labels. For instance, if the model answers \texttt{True} on the original and \texttt{False} on the augmented prompt, given the transformation, the answers are consistent regardless of the actual spatial arrangement.}
\vspace{-3mm}
\label{fig:concept-figure}
\end{figure}

Existing approaches treat this gap as a factual knowledge deficit. One line of work trains on fully synthetic scenes rendered from 3D simulators~\citep{chen2024sprite}. Another annotates real images using cascades of vision models --- e.g., depth estimators, camera calibrators, open vocabulary detectors --- to generate spatial QA pairs, then trains with supervised fine-tuning~\citep{chen2024spatialvlm,cai2025spatialbot,cheng2024spatialrgpt,ma2025spatialreasonerexplicitgeneralizable3d}. Both approaches aim to compensate for the lack of spatial knowledge by ingesting labeled data from external sources, offering as treatment further training with additional spatial factual supervision.

% treating the model as lacking spatial knowledge that must be injected through supervision.

We take a different view: the necessary capabilities likely exist in pre-trained LRMs, but the model's reasoning process lacks internal coherence and adherence to fundamental geometric principles. Prior work has shown that sampling multiple reasoning paths and selecting the most consistent answer improves accuracy~\citep{wang2022self}, or that inconsistency under superficial prompt changes signals unreliable answers~\citep{zhang2024unveiling,dagan2025cast}. Spatial reasoning has additional structure that makes consistency appealing. Consider the question ``Is the boy to the left of the cat?''. Flip the image: left becomes right, so the answer flips. Crop it: the answer stays. Swap the objects in the question: the answer flips again. These transformations have known effects -- determined by geometry and language, not scene content -- and when models answer consistently under them, they tend to be correct. Crucially, as we know when answers should match or flip, we obtain a self-supervised reward signal without needing ground-truth labels.

We formalize this idea as \emph{consistency verifiers}: reward functions that check whether two model answers satisfy the relationship induced by a transformation. Related ideas of equivariance appear in vision-language alignment~\citep{wang2023equivariant}, and in cycle-consistency, which has been used as a reward signal in other domains~\citep{bahng2025cycle}. We optimize consistency verifiers using group-based policy optimization (GRPO)~\citep{Guo_2025}. Because our verifier scores \emph{pairs} of completions rather than individual ones, we introduce OT-GRPO, a minimal-consistency matching variant that pairs completions to maximize disagreement. This prevents the model from achieving high reward through lucky alignments and outperforms simpler pairing strategies.

On four spatial tasks --- orientation, depth, size, and relative distance --- training with consistency reward alone nearly matches training with ground-truth accuracy reward. This finding suggests that enforcing logical coherence unlocks latent spatial reasoning abilities, without access to labels~\citep{yue2025does,chen2025mechanism}. We also find that consistency transfers between tasks (e.g., training on orientation improves depth) and between data domains (indoor to outdoor), demonstrating that label-free consistency training generalizes almost as well as ground-truth supervision.

\paragraph{Contributions.}
\begin{itemize}[leftmargin=*,itemsep=2pt,topsep=2pt]
    \item We observe that consistency under geometric and semantic transformations of the prompt is an indicator of correctness in spatial reasoning, and propose to use it as a self-supervised reward signal.
    \item We formalize \emph{consistency verifiers} --- reward functions checking answer consistency under such transformations --- and introduce OT-GRPO, a minimal-matching scheme suited to optimize pairwise rewards with group-based proximal policy optimization methods.
    \item We show that consistency reward alone approaches accuracy reward on four spatial reasoning tasks (orientation, depth, size, relative distance), and outperforms seven self-supervised baselines including Visual Jigsaw and SSL4RL.
    \item We find that consistency training transfers across tasks (e.g., depth $\leftrightarrow$ size) and domains (indoor $\leftrightarrow$ outdoor) as effectively as accuracy training, and extends to numeric outputs (counting, absolute distance).
\end{itemize}
\section{Related Work}
\label{sec:related-work}

\paragraph{Spatial Reasoning in VLMs.}
A growing body of benchmarks documents the spatial reasoning gap in vision-language models. \methodname{Mind the Gap}~\citep{stogiannidis2025mind} evaluates relative position and finds models lag humans by over 30\%. \methodname{SIBench}~\citep{yu2025far} tests perspective-taking and metric estimation with similar findings. \methodname{EASI}~\citep{cai2025holistic} provides a holistic evaluation across multiple spatial dimensions. They reveal that strong semantic understanding does not transfer to geometric understanding. To close this gap, prior work trains on synthetic scenes from 3D simulators~\citep{chen2024sprite} or annotates real images using depth estimators and 3D re-constructors~\citep{chen2024spatialvlm,cai2025spatialbot,cheng2024spatialrgpt}. Both directions require ground-truth supervision. We follow a different path: rather than labeling answers, we exploit the fact that well-selected geometric or textual transformations induce \emph{known} mappings on answers, providing a self-supervised reward signal.

\paragraph{Consistency as a Proxy for Correctness.} The idea that consistency signals correctness has multiple instantiations. \citet{wang2022self} show that sampling multiple CoT paths and marginalizing over answers (self-consistency) improves reasoning accuracy. \citet{zuo2025ttrltesttimereinforcementlearning} use the majority answer as a pseudo label for RL training. \citet{zhang2025co} extend this by using cross majority voting across problem reformulations as a self-supervised reward. \citet{zhao2025learningreasonexternalrewards} use the entropy of the model as reward signal to favor consistency. Additionally, studies of prompt sensitivity document that models change answers under superficial phrasing changes, suggesting inconsistency as a failure mode~\citep{zhang2024unveiling,dagan2025cast}.  Concurrent work applies RL to promote logical consistency in VLMs~\citep{convbench2025}. In this work we train with consistency reward \emph{only}, without relying on any ground-truth labels.

\paragraph{Consistency in Computer Vision.}
Consistency objectives have a long history in vision. 
%Semi-supervised learning enforces prediction invariance under data augmentations~\citep{sohn2020fixmatch,tarvainen2017mean}. 
Cycle-consistency enables unpaired image translation~\citep{zhu2017unpaired} and correspondence learning~\citep{zhou2016learning,wang2019learning}. Recent work uses cycle consistency as a reward for vision-language alignment~\citep{bahng2025cycle}. Our setting differs: we verify a \emph{known} relationship between VQA answers ---invariance or equivariance --- determined entirely by the transformation design, not learned from data.

\paragraph{RL for Reasoning.}
We build on GRPO~\citep{Guo_2025,guo2025deepseek}, an RL algorithm that uses verifier-based rewards without a learned critic. The broader role of RL in reasoning remains under debate. \citet{yue2025does} argue that RL with verifiable rewards acts as a filter, improving sampling efficiency without expanding underlying capability. \citet{chen2025mechanism} suggest RL selects among pre-existing reasoning patterns rather than creating new ones. Our results align with this view: consistency reward steers the model toward answers that satisfy geometric laws, surfacing accuracy already latent in the pretrained model.

\section{Consistency Verifiers}
\label{sec:verifiers}

Standard RL with verifiable rewards (RLVR) requires ground-truth labels to evaluate rewards. We introduce \emph{consistency verifiers}---reward functions that operate without labels by verifying the model's consistency under geometric and textual/semantic transformations between prompts.

\paragraph{Setup.} Let $\mathcal{D}$ be a dataset of VQA prompts $x=(I,q)$ pairing an image $I$ with a question $q$. Given a prompt $x$, a model $\pi_\theta$ produces a completion $y \sim \pi_\theta(\cdot \mid x)$, which we interpret as the answer. Our goal is to train $\pi_\theta$ to produce correct answers without access to ground-truth labels.

\paragraph{Prompt Augmentation.} Starting from a prompt $x=(I,q)$, we construct an \emph{augmented} prompt $x'=(I',q')$ by applying a transformation $T$ drawn from a predefined set $\mathcal{T}$. Each transformation $T \in \mathcal{T}$ decomposes as $T = (T_I, T_q)$, where $T_I$ is an image transformation and $T_q$ is a text transformation, yielding $x' = T(x) = (T_I(I), T_q(q))$. Each transformation $T$ induces a \emph{known} mapping $\phi_T$ on the answer space, which constrains how correct answers relate as a pair. Specifically, if $(y^\star, y'^\star)$ are the correct answers to $(x, x')$, then $y'^\star = \phi_T(y^\star)$. The transformation thus determines the relationship within the pair
% --- agreement or disagreement --- 
without revealing either answer. Crucially, we know how the pair must relate without knowing the individual values, which enables supervision without ground-truth.
Depending on $T$, $\phi_T$ takes one of two forms:
\begin{itemize}[leftmargin=*, itemsep=2pt, topsep=2pt]
    \item \emph{Invariance} ($\phi_T = \mathrm{id}$): The transformation preserves the answer. For example, object-preserving crops do not change spatial relationships. A consistent pair must agree.
    \item \emph{Equivariance} ($\phi_T = \neg$): The transformation changes the answer predictably. For example, a horizontal flip swaps left and right. Similarly, a text rewrite can swap relational words (e.g., \texttt{left}$\leftrightarrow$\texttt{right}) or object references (A$\leftrightarrow$B). A consistent pair must disagree accordingly.
\end{itemize}
Because $\phi_T$ is fully determined by the choice of $T$, we can check consistency \emph{without} knowing the ground-truth labels.

\paragraph{Consistency Verifier.} The known mapping $\phi_T$ allows us to check whether completions from related prompts are \emph{mutually consistent}. We define the \emph{consistency verifier} as:
\[
\verif_T(y, y') = 
\begin{cases}
1 & \text{if } y' = \phi_T(y), \\
0 & \text{otherwise},
\end{cases}
\]
where $y \sim \pi_\theta(\cdot \mid x)$ and $y' \sim \pi_\theta(\cdot \mid x')$ are completions from the original and augmented prompts. The verifier returns $1$ if and only if the two answers satisfy the expected relationship. Because $\phi_T$ is determined entirely by the transformation design, the verifier can be evaluated without ground-truth labels --- it provides a \emph{self-supervised} reward signal based on cross-prompt consistency.

\paragraph{Illustrative Example.} Consider a VQA where the question is $q=$ \texttt{"Is object A left of object B? True/False"}. We construct the augmented prompt $x'$ through the composition of three operations:
\begin{enumerate}[leftmargin=*, itemsep=1pt, topsep=1pt]
    \item \emph{Image}: Apply a horizontal flip, which swaps left $\leftrightarrow$ right.
    \item \emph{Text}: Swap the object references (A $\leftrightarrow$ B).
    \item \emph{Text}: Replace the relation (left $\to$ right).
\end{enumerate}
Each operation negates the answer. Since we apply three negations, the net effect is also a negation: $\phi_T(y) = \neg y$. The key insight is that we can verify consistency \emph{without knowing the ground truth}. If the model answers \texttt{True} on $x$ and \texttt{False} on $x'$, or vice-versa, the answers are consistent --- regardless of whether object A is actually left of object B. In other words, the verifier simply checks disagreement: $\verif_T(y,y') = \mathbf{1}\{y \neq y'\}$ (see~\cref{fig:concept-figure}).

\section{Learning with Consistency Verifiers}
\label{sec:learning-consistency}

Having defined the consistency verifier, we now turn to optimization. We build on GRPO~\citep{Guo_2025,guo2025deepseek}. Standard GRPO assumes a scalar reward for each completion, but our consistency verifier scores \emph{pairs} --- one completion from $x$ and one from its augmentation $x'$. The main challenge is therefore to convert pairwise consistency scores into per-completion rewards. We recall standard GRPO, then introduce our adaptation.

\subsection{Background on GRPO} In the standard supervised RLVR setting, we have access to ground-truth answers $y^\star$ and seek to maximize expected accuracy:
\begin{equation}
\label{eq:supervised_objective}
\max_\theta\; \mathbb{E}_{x \sim \mathcal{D}} \, \mathbb{E}_{y \sim \pi_\theta(\cdot \mid x)} \big[\verif(y, y^\star)\big],
\end{equation}
where $\verif(y, y^\star) = \mathbf{1}\{y = y^\star\}$ is the accuracy verifier. GRPO optimizes this objective by sampling $K$ completions from  $\pi_{\theta_{\mathrm{old}}}$ (a snapshot of the current policy):
\[
y_{1:K}\coloneqq(y_1,\ldots,y_K) \sim \pi_{\theta_{\mathrm{old}}}(\cdot \mid x).
\]
Each completion receives a reward $r_i = \verif(y_i, y^\star)$, which GRPO normalizes within the group to form advantages:
\begin{align}
A_i \coloneqq \frac{r_i - \mathrm{mean}(r_{1:K})}{\mathrm{std}(r_{1:K})}.
\end{align}

Each completion $y_i$ is a token sequence $(y_{i,1},\ldots,y_{i,T_i})$. GRPO optimizes a PPO-style clipped surrogate at the token level. The importance ratio and clipping operator are:
\begin{align}
\rho_{i,t}(\theta)
&\coloneqq
\frac{\pi_\theta(y_{i,t} \mid x, y_{i,<t})}
{\pi_{\theta_{\mathrm{old}}}(y_{i,t} \mid x, y_{i,<t})},
\\
\mathrm{clip}_\varepsilon(u)
&\coloneqq
\min\!\big(\!\max(u,1{-}\varepsilon),\,1{+}\varepsilon\big).
\end{align}
The token-level surrogate and the full GRPO objective are:
\begin{align}
\ell_{i,t}(\theta;A_i)
&\coloneqq
A_i \cdot \min\!\big(\rho_{i,t}(\theta),\,\mathrm{clip}_\varepsilon(\rho_{i,t}(\theta))\big),
\\
\label{eq:grpo_objective}
J_{\mathrm{GRPO}}(\theta)
&\coloneqq
\frac{1}{K}\sum_{i=1}^K \frac{1}{T_i}\sum_{t=1}^{T_i} \ell_{i,t}(\theta;A_i).
\end{align}

\subsection{From Ground-Truth to Pairwise Consistency}

With ground-truth labels, each completion $y_i$ has a canonical ``partner'' --- the label $y^\star$ --- against which it is evaluated. In our setting, we do not have access to $y^\star$. Instead, we have paired prompts $(x, x')$ where $x' = T(x)$ for some transformation $T$, and the consistency verifier $\verif_T(y, y')$ that checks whether completions $y\sim\pi_\theta(\cdot\mid x)$, $y'\sim\pi_\theta(\cdot\mid x)$ satisfy the expected relationship induced by the transformation. Our training objective becomes:
\begin{equation}
\label{eq:pair_objective}
\max_\theta\;
\mathbb{E}_{\substack{x \sim \mathcal{D},\, T \sim \mathcal{T} \\ x' = T(x)}}\,\,
\mathbb{E}_{\substack{y \sim \pi_\theta(\cdot \mid x)\\ y' \sim \pi_\theta(\cdot \mid x')}}
\big[\verif_T(y, y')\big].
\end{equation}
To use GRPO, we sample $K$ completions from each prompt, yielding a $K \times K$ matrix of verifier scores $\verif_T(y_i, y'_j)$, $1\le i,j \le K$. The challenge is to reduce this matrix to per-completion rewards $r_i$ for the GRPO update.

\paragraph{Natural pairing strategies.}
Given the verifier matrix, two natural strategies aggregate it into per-completion rewards:
\begin{itemize}[leftmargin=*, itemsep=4pt, topsep=4pt]
    \item \emph{Random pairing} pairs completions arbitrarily, e.g., by generation order: $r_i^{\mathrm{rand}} = \verif_T(y_i, y'_i)$.
    \item \emph{One-to-all} averages over the other group: $r_i^{\mathrm{all}} = \frac{1}{K} \sum_{j=1}^K \verif_T(y_i, y'_j)$.
\end{itemize}
Both yield the same expected reward ($\mathbb{E}[r_i^{\mathrm{rand}}] = \mathbb{E}[r_i^{\mathrm{all}}]$), but one-to-all reduces variance through averaging.

\subsection{Adversarial Consistency Pairing}

Let $\pi_\theta^x \coloneqq \pi_\theta(\cdot \mid x)$ and $\pi_\theta^{x'} \coloneqq \pi_\theta(\cdot \mid x')$ denote the output distributions on the original and augmented prompts. The two strategies above correspond to sampling $y$ and $y'$ independently, namely $(y, y') \sim \pi_\theta^x\otimes\pi_\theta^{x'}$. More generally, we can sample $(y, y')$ from any \emph{coupling} $\gamma \in \Gamma(\pi_\theta^x, \pi_\theta^{x'})$, i.e., any joint distribution with $\pi_\theta^{x}$ and $\pi_\theta^{x'}$ as marginals. This yields a generalized objective:
\begin{equation}
\label{eq:coupling_objective}
\max_\theta\;
\mathbb{E}_{\substack{x \sim \mathcal{D},\, T \sim \mathcal{T} \\ x' = T(x)}}\;
\mathbb{E}_{(y, y') \sim \gamma}\big[\verif_T(y, y')\big].
\end{equation}

\paragraph{Adversarial pairing.} Now, the question is: which coupling should we choose? We propose to be adversarial: select the coupling that \emph{minimizes} the expected consistency. This corresponds to finding an optimal transport (OT) coupling~\citep{peyre2019computational}, transforming the objective into a max-min problem:
\begin{equation}
\label{eq:maxmin_objective}
\max_\theta\;
\mathbb{E}_{\substack{x \sim \mathcal{D},\, T \sim \mathcal{T} \\ x' = T(x)}}\;
\min_{\gamma \in \Gamma(\pi_\theta^x, \pi_\theta^{x'})}
\mathbb{E}_{(y, y') \sim \gamma}\big[\verif_T(y, y')\big].
\end{equation}
In practice, we approximate the OT coupling from the $K$ completions sampled from each distribution $\pi_\theta^x$ and $\pi_\theta^{x'}$. This yields a third strategy to aggregate the verifier matrix into per-completion rewards. By the Birkhoff--von Neumann theorem~\citep{birkhoff1946tres}, the empirical OT coupling is realized by a permutation---the one minimizing total consistency. We then define the reward as:
\begin{gather*}
\sigma^\star \in \argmin_{\sigma \in \mathcal{S}_K} \sum_{i=1}^K \verif_T(y_i, y'_{\sigma(i)}), \\
r_i^{\mathrm{OT}} = \verif_T(y_i, y'_{\sigma^\star(i)}).
\end{gather*}

This is a linear assignment problem, solvable in $O(K^3)$ time---negligible for $K \leq 16$, the range we use in practice. In~\cref{app:minimal-consistency-wasserstein}, we show that~\eqref{eq:maxmin_objective} is equivalent to minimizing the Wasserstein distance~\citep{santambrogio2015optimal} between $\pi_\theta^x$ and $\pi_\theta^{x'}$, with cost $c_T = -\verif_T$, namely the \textit{inconsistency}. 

\begin{algorithm}[t]
\caption{One OT-GRPO Iteration (batch size 1).}
\label{alg:otgrpo}
\begin{algorithmic}[1]
\STATE \textbf{Require:} original prompt $\textcolor{blue}{x} \sim \mathcal{D}$.
\STATE \textbf{Transform:} sample $T \sim \mathcal{T}$, and set $\textcolor{red}{x'} \gets T(\textcolor{blue}{x})$.
\STATE \textbf{Generate:} $\textcolor{blue}{y_{1:K}} \sim \pi_{\theta_\text{old}}(\cdot \mid \textcolor{blue}{x})$ and $\textcolor{red}{y'_{1:K}} \sim \pi_{\theta_\text{old}}(\cdot \mid \textcolor{red}{x'})$.
\STATE \textbf{Score} (each pair): $V_{ij} \gets \verif_T(\textcolor{blue}{y_i}, \textcolor{red}{y'_j})$.
\STATE \textbf{Minimal Match:} $\sigma^\star \gets \argmin_{\sigma\in\mathcal{S}_K} \sum_i V_{i,\sigma(i)}$.
\STATE \textbf{Rewards:} $\textcolor{blue}{r_i} \gets V_{i,\sigma^\star(i)}$, and $\textcolor{red}{r'_{\sigma^\star(i)}} \gets \textcolor{blue}{r_i}$.
\STATE \textbf{GRPO Updates} on $(\textcolor{blue}{y_{1:K}}, \textcolor{blue}{r_{1:K}})$ and $(\textcolor{red}{y'_{1:K}}, \textcolor{red}{r'_{1:K}})$.
\end{algorithmic}
\end{algorithm}

\paragraph{OT-GRPO loss.} We derive a GRPO surrogate for the max-min objective~\eqref{eq:maxmin_objective}. Each matched pair $(y_i, y'_{\sigma^\star(i)})$ receives the same reward $r_i^\text{OT} = \verif_T(y_i, y'_{\sigma^\star(i)})$, normalized into a shared advantage $A_i$. We apply the standard GRPO update to both completions:
\begin{equation}
\label{eq:ot_grpo}
\begin{split}
J_{\mathrm{OT\text{-}GRPO}}(\theta)
&=
\underbrace{\frac{1}{K}\sum_{i=1}^K \frac{1}{T_i}\sum_{t=1}^{T_i} \ell_{i,t}(\theta;A_i)}_{\text{completions to orginal prompt } y_i}
\\
&\quad
\underbrace{\frac{1}{K}\sum_{i=1}^K \frac{1}{T'_{\sigma^\star(i)}}\sum_{t=1}^{T'_{\sigma^\star(i)}} \ell'_{\sigma^\star(i),t}(\theta;A_i)}_{\text{completions to augmented prompt } y'_{\sigma^\star(i)}}.
\end{split}
\end{equation}
We treat $\sigma^\star$ as constant when computing gradients. This is justified by \citet{danskin1966theory}'s theorem, since $\sigma^\star$ is the minimizer of the inner problem. We stress that both original and augmented prompts contribute to the training loss --- answers to augmented prompts do not merely provide pseudo-labels. Alg.~\ref{alg:otgrpo} summarizes one OT-GRPO iteration.

\paragraph{Why Minimal Consistency?} Minimal consistency finds the \emph{most challenging} pairing. If a model produces inconsistent completions, minimal consistency will find and expose those inconsistencies, whereas random pairing might miss them by luck. Conversely, if a model is truly consistent, minimal consistency cannot expose disagreement because none exists. This intuition can be made precise. Under a random baseline where the model guesses uniformly at random, random pairing and one-to-all both yield expected reward $\mathbb{E}[r_i^\text{rand}] = \mathbb{E}[r_i^\text{all}] = 1/2$. Minimal consistency, however, yields $\mathbb{E}[r_i^{\mathrm{OT}}] \approx 1/\sqrt{\pi K}$, which vanishes as $K$ grows (see~\cref{app:minimal-consistency} for derivation). In our label-free setting, this makes minimal consistency robust to reward hacking: high rewards require genuine consistency everywhere.

\subsection{Reasoning format}

We promote CoT reasoning~\citep{wei2023chainofthoughtpromptingelicitsreasoning} by prompting the model with a system instruction adapted from DeepSeek-R1~\citep{guo2025deepseek}. Completions are expected to follow a structured format: reasoning enclosed in \texttt{<think>...</think>} tags, followed by an answer in \texttt{<answer>...</answer>} tags. In addition to the consistency reward, we apply a format reward that checks whether the completion adheres to this structure. We use a weight of 1.0 for both consistency and format rewards.

\begin{table}
\centering
\normalsize
\caption{Total number of examples per task and domain.}
\label{tab:dataset-stats}
\begin{tabular}{lrr}
\toprule
\textbf{Task} & \textbf{KITTI} & \textbf{SUN RGB-D} \\
\midrule
Orientation & 2,406 & 7,738 \\
Depth & 4,288 & 6,222 \\
Size & 3,996 & 4,845 \\
Rel.\ Distance & 3,539 & 7,015 \\
\bottomrule
\end{tabular}
\end{table}

\begin{figure*}[!t]
\centering
\includegraphics[width=\textwidth]{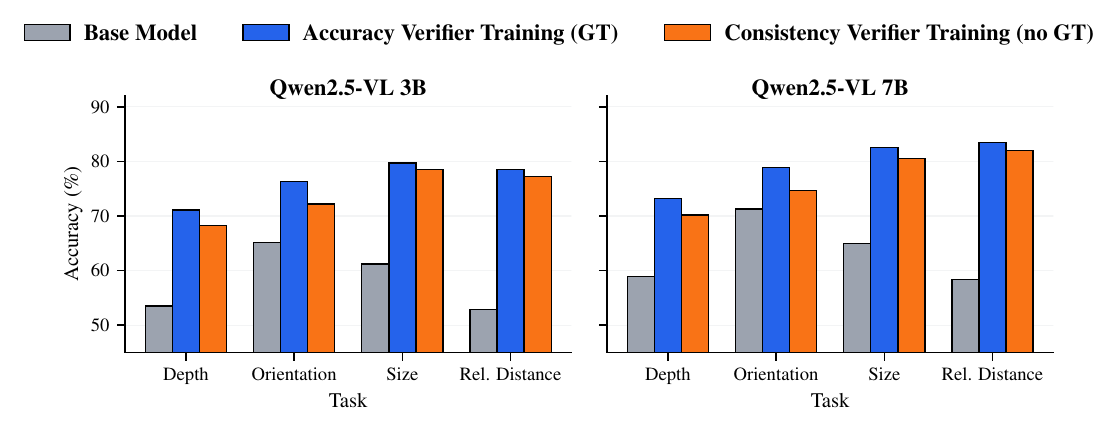}
\vspace{-8pt}
\caption{Same-task evaluation accuracy on SUN RGB-D (indoor) data for two model sizes (3B and 7B) and four tasks (Depth, Orientation, Size and Relative Distance). We separately train models using either an accuracy verifier (requires ground-truth labels) or a consistency verifier (no labels needed), then evaluate on held-out test samples. Despite never seeing any ground-truth labels during training, consistency-trained models nearly match accuracy-trained ones, with average gaps of only 2.3pp (3B) and 2.7pp (7B).}
\label{fig:self-task}
\end{figure*}
\section{Experiments}
\label{sec:experiments}

\subsection{Experimental Setup}

\paragraph{Tasks.}
We evaluate on four binary VQA tasks covering complementary spatial dimensions: \emph{orientation} (left/right), \emph{depth} (closer/further to camera), \emph{size} (3D volume/metric comparison), and \emph{relative distance} (which object is closer to an anchor). Each task highlights objects in the image and asks a True/False question. Ground truth comes from 3D bounding box annotations; see \cref{app:dataset-construction} for details.

\paragraph{Data.}
We use images from KITTI (outdoor driving) and SUN RGB-D (indoor scenes) with Omni3D 3D annotations~\citep{brazil2023omni3d}. To ensure unambiguous ground truth, we filter individual objects by visibility, size, and camera-relative position, and pairs by bounding-box overlap and a minimum metric gap along the task-relevant dimension; remaining pairs are scored by separation and sampled to balance diversity with unambiguity. Each task uses five paraphrased question templates sampled at training time. Set sizes are provided in Table~\ref{tab:dataset-stats}; see \cref{app:dataset-construction,app:example-prompts} for full details and example prompts. All models train on SUN RGB-D; KITTI is held out to test domain generalization.

\paragraph{Transformations.}
We construct augmented prompts by applying image and text transformations, each sampled independently with probability 0.5. Image transforms include horizontal flip, object-preserving crop, and color jitter. Text transforms include swapping object order and swapping the queried relation (e.g., left $\leftrightarrow$ right). Some transforms are \emph{invariant} (answer unchanged); others are \emph{equivariant} (answer flips). The consistency verifier uses the known effect of each transform to determine whether answers should match.

\textbf{Models.}
We fine-tune Qwen2.5-VL 3B and 7B with LoRA~\citep{hu2021loralowrankadaptationlarge} with $r=32$ and $\alpha=64$ applied to attention and MLP layers. The vision encoder is frozen.

\paragraph{Training.}
Each task/source uses an 80/20 split with a fixed seed shared across experiments. Because dataset sizes vary (Table~\ref{tab:dataset-stats}), we fix training steps rather than epochs: $T=500$ steps with $\eta=10^{-6}$, group size $K=8$, $\tau=1.0$, and no KL penalty ($\beta=0$). Each step samples 4 prompt pairs per device across 8 H100 GPUs (256K completions total). For minimal pairing we use the network-simplex solver from POT~\citep{flamary2021pot}.

\paragraph{Evaluation.}
We evaluate on the held-out 20\% test split of every (task, source) combination and use two training regimes. In the \emph{per-task} regime (\cref{fig:self-task,fig:cross-task,fig:cross-domain,fig:numeric-tasks,fig:ablation-pairing}), each model is trained on a single task's SUN RGB-D training set and evaluated on the test splits of all (task, source) pairs, yielding the same-task, cross-task, and cross-domain measurements reported below. In the \emph{joint} regime (\cref{fig:baseline-comparison,fig:corruption-ablation}), our model is trained on the union of the four boolean SUN RGB-D training sets; we then report the average test accuracy across the four SUN RGB-D tasks, with KITTI held out. For every self-supervised baseline appearing in \cref{fig:baseline-comparison} (Visual Jigsaw and SSL4RL variants), we use the publicly released Hugging Face checkpoint without further fine-tuning on our data.

\paragraph{Accuracy vs.\ Consistency Training.}
We compare two reward signals: \emph{accuracy training} rewards correct answers using ground-truth labels (1/0), while \emph{consistency training} hides labels entirely and rewards predictions that satisfy the expected consistency relationship under transformations. For fairness, both methods share the same data, the same online-sampled image and text augmentations, and the same held-out test set.

\subsection{In-Depth Comparison: Accuracy vs.\ Consistency}

\paragraph{Consistency approaches ground-truth accuracy.}
Figure~\ref{fig:self-task} compares baseline, accuracy training, and consistency training on SUN RGB-D. Consistency closely tracks accuracy: 76.8\% vs.\ 79.6\% for 7B (gap 2.8pp) and 74.1\% vs.\ 76.4\% for 3B (gap 2.3pp), both well above baseline (63.4\% / 58.2\%). The gap is stable across tasks (2.1--3.4pp for 7B), showing that the self-supervised consistency signal recovers most of the gain achievable from labels.

\begin{figure*}[!t]
\centering
\includegraphics[width=\textwidth]{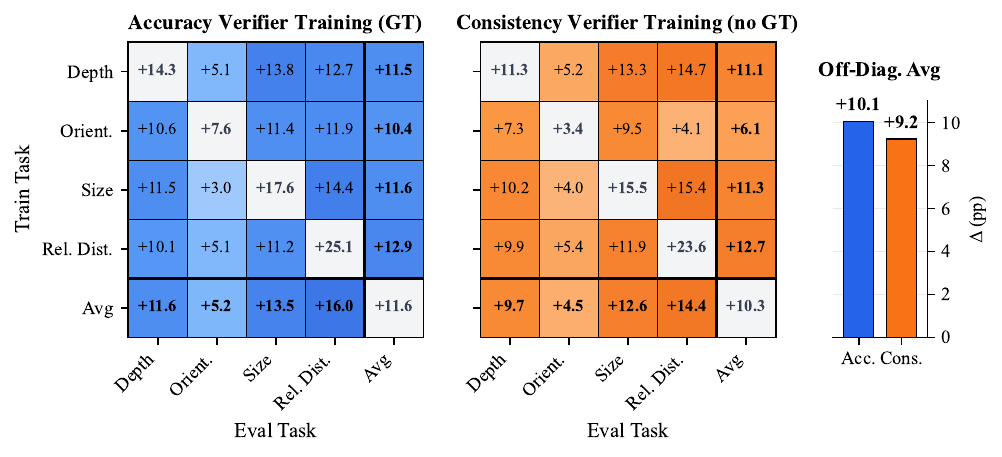}
\caption{Cross-task transfer on SUN RGB-D (7B model). Each cell shows the improvement over the pre-training baseline (in percentage points) when training on the row task and evaluating on the column task. Off-diagonal cells (colored) show cross-task transfer; diagonal cells (gray) show same-task performance for reference. The rightmost panel shows the average off-diagonal improvement: consistency training nearly matches accuracy training with a gap of only 0.8pp.}
\label{fig:cross-task}
\end{figure*}

\paragraph{Consistency transfers across tasks.}
Figure~\ref{fig:cross-task} reports cross-task transfer (7B, SUN RGB-D), with cells showing improvement over baseline by training/eval task. All 12 off-diagonal entries are positive for both methods (worst: orientation$\to$size, +5.8pp), giving average gains of +10.1pp (accuracy) vs.\ +9.2pp (consistency); same-task (diagonal) gains are +14.3pp vs.\ +13.5pp---a sub-1pp gap throughout. Transfer is asymmetric: depth yields the largest average cross-task gain (+12.3 / +11.1pp) while orientation gives the least (+7.8 / +7.2pp). This is consistent with depth being the only task whose answer depends on a continuous 3D axis that also underlies size and inter-object distance comparisons, while orientation is essentially a 2D judgment whose representations transfer least to the other three tasks.

\begin{figure*}[!t]
\centering
\includegraphics[width=\textwidth]{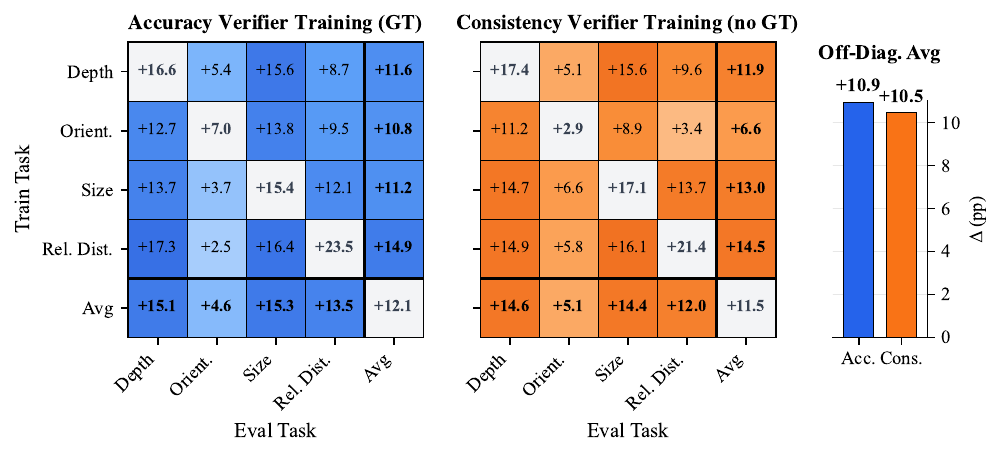}
\caption{Cross-domain transfer from SUN RGB-D to KITTI (7B model). Models are trained on indoor scenes (SUN RGB-D) and evaluated on outdoor driving scenes (KITTI). Diagonal cells (gray) show same-task cross-domain transfer; off-diagonal cells (colored) combine both task and domain shifts. Despite the visual gap between indoor and outdoor environments, both training methods generalize well. Consistency training nearly matches accuracy training, with off-diagonal gaps of only 0.5pp.}
\label{fig:cross-domain}
\end{figure*}

\paragraph{Consistency transfers across tasks and domains.}
Figure~\ref{fig:cross-domain} shows transfer from SUN RGB-D (indoor) to KITTI (outdoor); diagonals capture same-task domain shift, off-diagonals combine task and domain shift. Same-task gains over the KITTI baseline are +15.6pp (accuracy) vs.\ +14.7pp (consistency); cross-task gains are +10.9 vs.\ +10.5pp---again a sub-1pp gap. Despite a higher KITTI baseline (68.2\% vs.\ 63.4\% for 7B), same-task cross-domain transfer (+15pp) exceeds same-task within-domain gains on SUN RGB-D (+13.5pp), suggesting spatial reasoning learned on cluttered indoor scenes generalizes well to sparser outdoor environments and that task alignment matters more than domain similarity.

\begin{figure}[!t]
\centering
\includegraphics[width=\columnwidth]{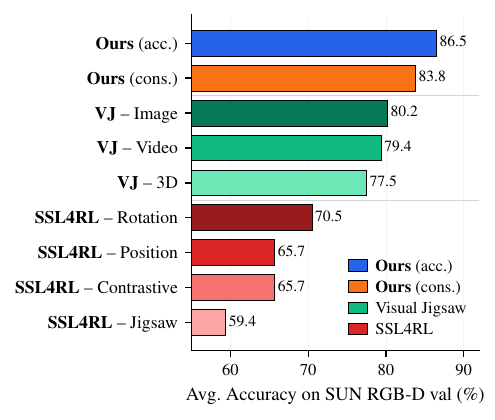}
\caption{Comparison against self-supervised baselines on SUN RGB-D (val), averaged over the four boolean tasks. Our method (acc.\ and cons.) uses Qwen2.5-VL-7B trained jointly on the four boolean tasks (2{,}000 steps); Visual Jigsaw and SSL4RL variants are evaluated from their publicly released Hugging Face checkpoints, without further fine-tuning on our data. Our consistency reward (no labels) outperforms every self-supervised baseline and trails accuracy training (with labels) by only 2.7pp.}
\label{fig:baseline-comparison}
\end{figure}

\subsection{Comparison to Self-Supervised Baselines}

\paragraph{Consistency outperforms self-supervised baselines.}
Across both families of self-supervised baselines, label-free consistency wins outright (Figure~\ref{fig:baseline-comparison}). To benchmark against published methods that share a task pool, we train jointly on the four boolean tasks (Qwen2.5-VL-7B, 2{,}000 steps on $\sim$40K shuffled SUN RGB-D examples, all other hyperparameters as above) and compare against the Visual Jigsaw and SSL4RL checkpoints released by the authors on Hugging Face. At 83.8\%, our reward beats the strongest Visual Jigsaw variant~\citep{wu2025visualjigsaw} (Image, Video, 3D) by +3.6pp and four SSL4RL variants~\citep{guo2025ssl4rl} (Rotation, Position, Contrastive, Jigsaw) by +13--24pp; accuracy training (with labels) tops the table at 86.5\%, leaving only a 2.7pp gap to label-free consistency. Rotation/jigsaw-style pretext tasks transfer the worst---several land below the 60.6\% pre-training baseline.

\begin{figure}[!t]
\centering
\includegraphics[width=\columnwidth]{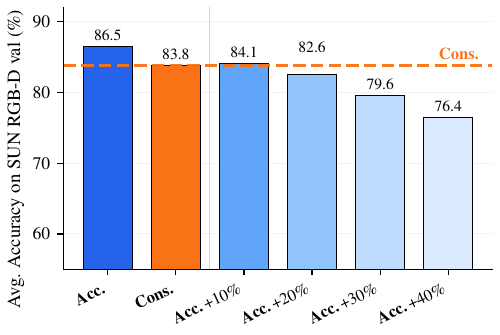}
\caption{Robustness to label corruption. ``Acc.\,+$N$\%'' flips $N$\% of training labels uniformly at random in accuracy training; consistency training uses no labels at all (orange dashed line). Accuracy still edges out consistency at 10\% corruption, but the label-free signal overtakes it from 20\% onward.}
\label{fig:corruption-ablation}
\end{figure}

\subsection{Robustness to Label Corruption}

\paragraph{Consistency overtakes accuracy from 20\% noise.}
Reusing the all-tasks protocol above, we corrupt accuracy training by flipping each ground-truth label independently with probability $p \in \{0.1, 0.2, 0.3, 0.4\}$ before computing the reward; consistency, reading no labels, is unaffected. Clean accuracy reaches 86.5\% and consistency 83.8\%. At 10\% corruption accuracy drops to 84.1\%, essentially matching the label-free signal (within 0.3pp); from 20\% onward consistency overtakes accuracy outright, with gaps of 1.2pp (Acc.\ 82.6\%), 4.2pp (79.6\%), and 7.4pp (76.4\%) at 20\%, 30\%, and 40\% noise, respectively (Figure~\ref{fig:corruption-ablation}). Our uniform per-example flips are a conservative noise model: real annotation pipelines tend to produce errors that are \emph{correlated} and concentrated on the harder examples, so the crossover where consistency overtakes accuracy would likely arrive below the 20\% reported here. This matters in practice: spatial-reasoning annotation pipelines chain depth estimators, calibrators, detectors, and frontier LLMs~\citep{chen2024spatialvlm,cheng2024spatialrgpt}, so even small per-stage error rates compound into corruption levels where consistency overtakes supervision.

\begin{figure}[!t]
\centering
\includegraphics[width=\columnwidth]{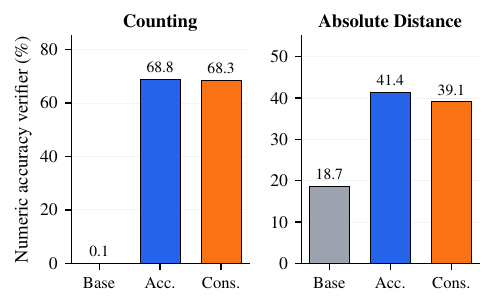}
\caption{Extension to numeric tasks. Numeric accuracy verifier $\verif(y, y^\star) = \max(0, 1 - |y-y^\star|/y^\star)$, reported as a percentage, on counting (integer object counts) and absolute distance estimation (meters). Consistency closely tracks accuracy on both tasks, trailing by 0.5pp on counting and 2.3pp on absolute distance.}
\label{fig:numeric-tasks}
\end{figure}

\subsection{Exploration on Numeric Tasks}

\paragraph{Consistency closely matches accuracy on numeric outputs.}
We extend the framework to two numeric tasks on SUN RGB-D: \emph{counting} (integer in $\{2,\ldots,5\}$, e.g., \texttt{How many chairs are in the image?}) and \emph{absolute distance estimation} (continuous meters, e.g., \texttt{What is the distance between object 1 and object 2 in meters?}; examples in \cref{app:example-prompts-numeric}). We reuse the same image and text augmentations introduced for the boolean tasks but drop relation swap, since neither a count nor a metric distance admits an equivariant relation to negate. All retained transformations are therefore \emph{invariant}: jitter, mirroring, cropping around the queried objects, and paraphrasing the question leave the underlying count or 3D distance unchanged. In this regime the consistency verifier collapses to a single invariance check---the model should produce the same answer on both slots of a pair.

\begin{figure*}[!t]
\centering
\includegraphics[width=\textwidth]{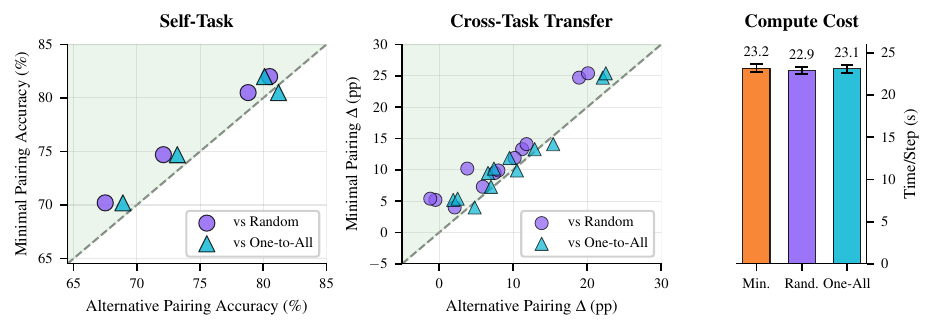}
\caption{Ablation on pairing strategy for consistency training (7B model, SUN RGB-D). Each point compares minimal pairing (y-axis) against an alternative strategy (x-axis): random (purple circles) or one-to-all (cyan triangles). Points above the diagonal indicate minimal pairing outperforms the alternative. \textbf{Left:} Self-task accuracy---each point is one of the four tasks, showing that minimal pairing achieves higher accuracy. \textbf{Right:} Cross-task transfer improvement over baseline ($\Delta$ in percentage points)---each point is one of the 12 off-diagonal train/eval task pairs, showing that minimal pairing yields larger transfer gains. The green region highlights where minimal pairing wins.}
\label{fig:ablation-pairing}
\end{figure*}

Because the prediction is now a continuous quantity rather than a True/False label, both the accuracy and the consistency verifiers from~\cref{sec:verifiers} have to be adapted: exact equality is too strict, so we replace it with a \emph{rescaled mean absolute error}. Each verifier returns $1$ when its two arguments coincide, decreases linearly in their absolute difference, and clips to $0$ once that difference exceeds the reference value. The accuracy verifier rescales the error by the ground truth $y^\star$, while the (symmetric) consistency verifier rescales by the larger of the two predictions, keeping both scores in $[0,1]$:
\begin{align*}
\verif(y,y^\star) &= \max\!\big(0,\,1-|y-y^\star|/y^\star\big), \\
\verif_T(y,y')   &= \max\!\big(0,\,1-|y-y'|/\max(y,y')\big).
\end{align*}
We train Qwen2.5-VL-7B per task for 500 steps with all other hyperparameters identical to the boolean setting. Consistency closely tracks accuracy on both tasks (Figure~\ref{fig:numeric-tasks}): on counting, accuracy reaches 68.8\% and consistency 68.3\%---a 0.5pp gap from a base near chance (0.1\%); on absolute distance, accuracy hits 41.4\% versus 39.1\% for consistency, a 2.3pp gap. In both cases the label-free signal trails the labeled one by less than the boolean-task gap (2.7pp), suggesting the consistency reward remains competitive as we extend beyond binary answer spaces.

\subsection{Ablation: Pairing Strategy}

\paragraph{Minimal pairing outperforms alternatives.}
Figure~\ref{fig:ablation-pairing} compares minimal OT pairing to random and one-to-all pairing (7B, SUN RGB-D). Minimal pairing wins on both axes: self-task accuracy of 76.8\% vs.\ 74.7\% (random) and 75.8\% (one-to-all), and cross-task transfer of +11.7pp vs.\ +8.2pp and +10.3pp---an advantage that holds across all 4 self-task and 12 cross-task settings. The cost is negligible: 23.2s/step vs.\ 22.9s and 23.1s, under 1.5\% overhead. The gain stems from harder negative pairs: matching each completion with its most challenging counterpart makes random agreement unlikely (see \cref{app:minimal-consistency}).

\section{Conclusion}
\label{sec:conclusion}

We introduced consistency verifiers for post-training VLMs on spatial reasoning without ground-truth labels. By exploiting known relationships between answers under geometric and semantic transformations, our approach provides a self-supervised reward signal that approaches the performance of ground-truth supervision --- establishing the value of  RL post-training for key invariances, equivariances and adherence to general spatial reasoning principles. Experiments demonstrate consistent benefits and generalization across four tasks (orientation, depth, size, relative distance) and two data domains (indoor, outdoor).

\paragraph{Limitations.}
Our consistency rewards compare two predictions at a time. Extending to richer relational structures---chains of three or more transformations, or compositional reasoning steps---could exploit additional geometric structure. Beyond spatial reasoning, applying consistency verifiers to other modalities is a natural next step.

\section*{Acknowledgements}

The authors thank Saining Xie for informative discussions.

\section*{Impact Statement}

This work presents a self-supervised method for improving spatial reasoning in VLMs by exploiting consistency under geometric transformations. The approach reduces reliance on labeled data, which may lower annotation costs and broaden access to model improvement. We do not anticipate specific negative societal consequences beyond those generally associated with advances in machine learning.

\bibliography{consistency}

\begin{thebibliography}{39}
\providecommand{\natexlab}[1]{#1}
\providecommand{\url}[1]{\texttt{#1}}
\expandafter\ifx\csname urlstyle\endcsname\relax
  \providecommand{\doi}[1]{doi: #1}\else
  \providecommand{\doi}{doi: \begingroup \urlstyle{rm}\Url}\fi

\bibitem[Anonymous(2025)]{convbench2025}
Anonymous.
\newblock Be consistent! enhancing robust visual reasoning in {LVLMs} with
  consistency constraints.
\newblock ICLR 2026 Conference Submission 6260, 2025.
\newblock URL \url{https://openreview.net/forum?id=REPLACE_WITH_ID}.

\bibitem[Bahng et~al.(2025)Bahng, Chan, Durand, and Isola]{bahng2025cycle}
Bahng, H., Chan, C., Durand, F., and Isola, P.
\newblock Cycle consistency as reward: Learning image-text alignment without
  human preferences.
\newblock 2025.

\bibitem[Birkhoff(1946)]{birkhoff1946tres}
Birkhoff, G.
\newblock Tres observaciones sobre el {\'a}lgebra lineal.
\newblock \emph{Universidad Nacional de Tucum{\'a}n Revista Serie A},
  5:\penalty0 147--151, 1946.

\bibitem[Brazil et~al.(2023)Brazil, Straub, Ravi, Johnson, and
  Gkioxari]{brazil2023omni3d}
Brazil, G., Straub, J., Ravi, N., Johnson, J., and Gkioxari, G.
\newblock {Omni3D}: A large benchmark and model for {3D} object detection in
  the wild.
\newblock In \emph{Proceedings of the IEEE/CVF Conference on Computer Vision
  and Pattern Recognition}, pp.\  13154--13164, 2023.

\bibitem[Cai et~al.(2025{\natexlab{a}})Cai, Ponomarenko, Yuan, Li, Yang, Dong,
  and Zhao]{cai2025spatialbot}
Cai, W., Ponomarenko, Y., Yuan, J., Li, X., Yang, W., Dong, H., and Zhao, B.
\newblock Spatialbot: Precise spatial understanding with vision language
  models.
\newblock In \emph{2025 IEEE International Conference on Robotics and
  Automation (ICRA)}, pp.\  9490--9498. IEEE, 2025{\natexlab{a}}.

\bibitem[Cai et~al.(2025{\natexlab{b}})Cai, Wang, Sun, Wang, Gu, Yin, Lin,
  Yang, Wei, Shi, Deng, Han, Chen, Li, Fan, Deng, Lu, Li, Liu, Wang, Lin, and
  Yang]{cai2025holistic}
Cai, Z., Wang, Y., Sun, Q., Wang, R., Gu, C., Yin, W., Lin, Z., Yang, Z., Wei,
  C., Shi, X., Deng, K., Han, X., Chen, Z., Li, J., Fan, X., Deng, H., Lu, L.,
  Li, B., Liu, Z., Wang, Q., Lin, D., and Yang, L.
\newblock Holistic evaluation of multimodal llms on spatial intelligence.
\newblock \emph{arXiv preprint arXiv:2508.13142}, 2025{\natexlab{b}}.

\bibitem[Chen et~al.(2024{\natexlab{a}})Chen, Xu, Kirmani, Ichter, Driess,
  Florence, Sadigh, Guibas, and Xia]{chen2024spatialvlm}
Chen, B., Xu, Z., Kirmani, S., Ichter, B., Driess, D., Florence, P., Sadigh,
  D., Guibas, L., and Xia, F.
\newblock Spatialvlm: Endowing vision-language models with spatial reasoning
  capabilities.
\newblock In \emph{Proceedings of the IEEE/CVF Conference on Computer Vision
  and Pattern Recognition}, pp.\  14455--14465, 2024{\natexlab{a}}.

\bibitem[Chen et~al.(2024{\natexlab{b}})]{chen2024sprite}
Chen, J. et~al.
\newblock Sprite: Scaling spatial reasoning in mllms through programmatic data
  synthesis.
\newblock \emph{arXiv preprint arXiv:2512.16237}, 2024{\natexlab{b}}.

\bibitem[Chen et~al.(2025{\natexlab{a}})]{chen2025space10}
Chen, W. et~al.
\newblock Space-10: A comprehensive benchmark for multimodal large language
  models in compositional spatial intelligence.
\newblock \emph{arXiv preprint arXiv:2506.07966}, 2025{\natexlab{a}}.

\bibitem[Chen et~al.(2025{\natexlab{b}})Chen, Li, and Zou]{chen2025mechanism}
Chen, X., Li, T., and Zou, D.
\newblock On the mechanism of reasoning pattern selection in reinforcement
  learning for language models.
\newblock \emph{arXiv preprint arXiv:2506.04695}, 2025{\natexlab{b}}.

\bibitem[Cheng et~al.(2024)Cheng, Yin, Fu, Guo, Yang, Kautz, Wang, and
  Molchanov]{cheng2024spatialrgpt}
Cheng, A.-C., Yin, H., Fu, Y., Guo, Q., Yang, R., Kautz, J., Wang, X., and
  Molchanov, P.
\newblock Spatialrgpt: Grounded spatial reasoning in vision-language models.
\newblock In \emph{Advances in Neural Information Processing Systems},
  volume~37, 2024.

\bibitem[Dagan et~al.(2025)Dagan, Loginova, and Batra]{dagan2025cast}
Dagan, G., Loginova, O., and Batra, A.
\newblock Cast: Cross-modal alignment similarity test for vision language
  models.
\newblock In \emph{Proceedings of the 31st International Conference on
  Computational Linguistics}, pp.\  1387--1402, 2025.

\bibitem[Danskin(1966)]{danskin1966theory}
Danskin, J.~M.
\newblock \emph{The Theory of Max-Min and its Application to Weapons Allocation
  Problems}.
\newblock Springer, Berlin, 1966.

\bibitem[DeepSeek-AI(2025)]{guo2025deepseek}
DeepSeek-AI.
\newblock Deepseek-r1: Incentivizing reasoning capability in llms via
  reinforcement learning, 2025.
\newblock URL \url{https://arxiv.org/abs/2501.12948}.

\bibitem[Flamary et~al.(2021)Flamary, Courty, Gramfort, Alaya, Boisbunon,
  Chambon, Chapel, Corenflos, Fatras, Fournier, Gautheron, Gayraud, Janati,
  Rakotomamonjy, Redko, Rolet, Schutz, Seguy, Sutherland, Tavenard, Tong, and
  Vayer]{flamary2021pot}
Flamary, R., Courty, N., Gramfort, A., Alaya, M.~Z., Boisbunon, A., Chambon,
  S., Chapel, L., Corenflos, A., Fatras, K., Fournier, N., Gautheron, L.,
  Gayraud, N.~T., Janati, H., Rakotomamonjy, A., Redko, I., Rolet, A., Schutz,
  A., Seguy, V., Sutherland, D.~J., Tavenard, R., Tong, A., and Vayer, T.
\newblock {POT}: {P}ython {O}ptimal {T}ransport.
\newblock \emph{Journal of Machine Learning Research}, 22\penalty0
  (78):\penalty0 1--8, 2021.

\bibitem[Geiger et~al.(2012)Geiger, Lenz, and Urtasun]{geiger2012cvpr}
Geiger, A., Lenz, P., and Urtasun, R.
\newblock Are we ready for autonomous driving? {T}he {KITTI} vision benchmark
  suite.
\newblock In \emph{Proceedings of the IEEE/CVF Conference on Computer Vision
  and Pattern Recognition}, pp.\  3354--3361, 2012.

\bibitem[Guo et~al.(2025{\natexlab{a}})Guo, Yang, Zhang, Song, Wang, Zhu, Xu,
  Zhang, Ma, Bi, Zhang, Yu, Wu, Wu, Gou, Shao, Li, Gao, Liu, Xue, Wang, Wu,
  Feng, Lu, Zhao, Deng, Ruan, Dai, Chen, Ji, Li, Lin, Dai, Luo, Hao, Chen, Li,
  Zhang, Xu, Ding, Gao, Qu, Li, Guo, Li, Chen, Yuan, Tu, Qiu, Li, Cai, Ni,
  Liang, Chen, Dong, Hu, You, Gao, Guan, Huang, Yu, Wang, Zhang, Zhao, Wang,
  Zhang, Xu, Xia, Zhang, Zhang, Tang, Zhou, Li, Wang, Li, Tian, Huang, Zhang,
  Wang, Chen, Du, Ge, Zhang, Pan, Wang, Chen, Jin, Chen, Lu, Zhou, Chen, Ye,
  Wang, Yu, Zhou, Pan, Li, Zhou, Wu, Yun, Pei, Sun, Wang, Zeng, Liu, Liang,
  Gao, Yu, Zhang, Xiao, An, Liu, Wang, Chen, Nie, Cheng, Liu, Xie, Liu, Yang,
  Li, Su, Lin, Li, Jin, Shen, Chen, Sun, Wang, Song, Zhou, Wang, Shan, Li,
  Wang, Wei, Zhang, Xu, Li, Zhao, Sun, Wang, Yu, Zhang, Shi, Xiong, He, Piao,
  Wang, Tan, Ma, Liu, Guo, Ou, Wang, Gong, Zou, He, Xiong, Luo, You, Liu, Zhou,
  Zhu, Huang, Li, Zheng, Zhu, Ma, Tang, Zha, Yan, Ren, Ren, Sha, Fu, Xu, Xie,
  Zhang, Hao, Ma, Yan, Wu, Gu, Zhu, Liu, Li, Xie, Song, Pan, Huang, Xu, Zhang,
  and Zhang]{Guo_2025}
Guo, D., Yang, D., Zhang, H., Song, J., Wang, P., Zhu, Q., Xu, R., Zhang, R.,
  Ma, S., Bi, X., Zhang, X., Yu, X., Wu, Y., Wu, Z.~F., Gou, Z., Shao, Z., Li,
  Z., Gao, Z., Liu, A., Xue, B., Wang, B., Wu, B., Feng, B., Lu, C., Zhao, C.,
  Deng, C., Ruan, C., Dai, D., Chen, D., Ji, D., Li, E., Lin, F., Dai, F., Luo,
  F., Hao, G., Chen, G., Li, G., Zhang, H., Xu, H., Ding, H., Gao, H., Qu, H.,
  Li, H., Guo, J., Li, J., Chen, J., Yuan, J., Tu, J., Qiu, J., Li, J., Cai,
  J.~L., Ni, J., Liang, J., Chen, J., Dong, K., Hu, K., You, K., Gao, K., Guan,
  K., Huang, K., Yu, K., Wang, L., Zhang, L., Zhao, L., Wang, L., Zhang, L.,
  Xu, L., Xia, L., Zhang, M., Zhang, M., Tang, M., Zhou, M., Li, M., Wang, M.,
  Li, M., Tian, N., Huang, P., Zhang, P., Wang, Q., Chen, Q., Du, Q., Ge, R.,
  Zhang, R., Pan, R., Wang, R., Chen, R.~J., Jin, R.~L., Chen, R., Lu, S.,
  Zhou, S., Chen, S., Ye, S., Wang, S., Yu, S., Zhou, S., Pan, S., Li, S.~S.,
  Zhou, S., Wu, S., Yun, T., Pei, T., Sun, T., Wang, T., Zeng, W., Liu, W.,
  Liang, W., Gao, W., Yu, W., Zhang, W., Xiao, W.~L., An, W., Liu, X., Wang,
  X., Chen, X., Nie, X., Cheng, X., Liu, X., Xie, X., Liu, X., Yang, X., Li,
  X., Su, X., Lin, X., Li, X.~Q., Jin, X., Shen, X., Chen, X., Sun, X., Wang,
  X., Song, X., Zhou, X., Wang, X., Shan, X., Li, Y.~K., Wang, Y.~Q., Wei,
  Y.~X., Zhang, Y., Xu, Y., Li, Y., Zhao, Y., Sun, Y., Wang, Y., Yu, Y., Zhang,
  Y., Shi, Y., Xiong, Y., He, Y., Piao, Y., Wang, Y., Tan, Y., Ma, Y., Liu, Y.,
  Guo, Y., Ou, Y., Wang, Y., Gong, Y., Zou, Y., He, Y., Xiong, Y., Luo, Y.,
  You, Y., Liu, Y., Zhou, Y., Zhu, Y.~X., Huang, Y., Li, Y., Zheng, Y., Zhu,
  Y., Ma, Y., Tang, Y., Zha, Y., Yan, Y., Ren, Z.~Z., Ren, Z., Sha, Z., Fu, Z.,
  Xu, Z., Xie, Z., Zhang, Z., Hao, Z., Ma, Z., Yan, Z., Wu, Z., Gu, Z., Zhu,
  Z., Liu, Z., Li, Z., Xie, Z., Song, Z., Pan, Z., Huang, Z., Xu, Z., Zhang,
  Z., and Zhang, Z.
\newblock Deepseek-r1 incentivizes reasoning in llms through reinforcement
  learning.
\newblock \emph{Nature}, 645\penalty0 (8081):\penalty0 633–638, September
  2025{\natexlab{a}}.
\newblock ISSN 1476-4687.
\newblock \doi{10.1038/s41586-025-09422-z}.
\newblock URL \url{http://dx.doi.org/10.1038/s41586-025-09422-z}.

\bibitem[Guo et~al.(2025{\natexlab{b}})Guo, Zhou, Wang, Zhang, Zhang, Jegelka,
  Wang, Chai, Yin, Lin, and Wang]{guo2025ssl4rl}
Guo, X., Zhou, R., Wang, Y., Zhang, Q., Zhang, C., Jegelka, S., Wang, X., Chai,
  J., Yin, G., Lin, W., and Wang, Y.
\newblock {SSL4RL}: Revisiting self-supervised learning as intrinsic reward for
  visual-language reasoning, 2025{\natexlab{b}}.
\newblock URL \url{https://arxiv.org/abs/2510.16416}.

\bibitem[Hu et~al.(2021)Hu, Shen, Wallis, Allen-Zhu, Li, Wang, Wang, and
  Chen]{hu2021loralowrankadaptationlarge}
Hu, E.~J., Shen, Y., Wallis, P., Allen-Zhu, Z., Li, Y., Wang, S., Wang, L., and
  Chen, W.
\newblock Lora: Low-rank adaptation of large language models, 2021.
\newblock URL \url{https://arxiv.org/abs/2106.09685}.

\bibitem[Jiang et~al.(2025)Jiang, Chai, Brbi{\'c}, and Moor]{jiang2025marble}
Jiang, Y., Chai, Y., Brbi{\'c}, M., and Moor, M.
\newblock Marble: A hard benchmark for multimodal spatial reasoning and
  planning.
\newblock \emph{arXiv preprint arXiv:2506.22992}, 2025.

\bibitem[Ma et~al.(2025)Ma, Chou, Liu, Wang, de~Melo, Xie, and
  Yuille]{ma2025spatialreasonerexplicitgeneralizable3d}
Ma, W., Chou, Y.-C., Liu, Q., Wang, X., de~Melo, C., Xie, J., and Yuille, A.
\newblock Spatial{R}easoner: Towards explicit and generalizable 3d spatial
  reasoning, 2025.
\newblock URL \url{https://arxiv.org/abs/2504.20024}.

\bibitem[Peyr{\'e} \& Cuturi(2019)Peyr{\'e} and Cuturi]{peyre2019computational}
Peyr{\'e}, G. and Cuturi, M.
\newblock Computational optimal transport: With applications to data science.
\newblock \emph{Foundations and Trends in Machine Learning}, 11\penalty0
  (5-6):\penalty0 355--607, 2019.

\bibitem[Santambrogio(2015)]{santambrogio2015optimal}
Santambrogio, F.
\newblock \emph{Optimal Transport for Applied Mathematicians: Calculus of
  Variations, PDEs, and Modeling}.
\newblock Birkh{\"a}user, Cham, 2015.

\bibitem[Song et~al.(2015)Song, Lichtenberg, and Xiao]{song2015sun}
Song, S., Lichtenberg, S.~P., and Xiao, J.
\newblock {SUN RGB-D}: A {RGB-D} scene understanding benchmark suite.
\newblock In \emph{Proceedings of the IEEE/CVF Conference on Computer Vision
  and Pattern Recognition}, pp.\  567--576, 2015.

\bibitem[Stogiannidis et~al.(2025)Stogiannidis, McDonagh, and
  Tsaftaris]{stogiannidis2025mind}
Stogiannidis, I., McDonagh, S., and Tsaftaris, S.~A.
\newblock Mind the gap: Benchmarking spatial reasoning in vision-language
  models.
\newblock \emph{arXiv preprint arXiv:2503.19707}, 2025.

\bibitem[Thrush et~al.(2022)Thrush, Jiang, Bartolo, Singh, Williams, Kiela, and
  Ross]{thrush2022winoground}
Thrush, T., Jiang, R., Bartolo, M., Singh, A., Williams, A., Kiela, D., and
  Ross, C.
\newblock Winoground: Probing vision and language models for visio-linguistic
  compositionality.
\newblock In \emph{Proceedings of the IEEE/CVF Conference on Computer Vision
  and Pattern Recognition}, pp.\  5238--5248, 2022.

\bibitem[Wang et~al.(2023)Wang, Lin, Li, Lin, Yang, Zhang, Liu, and
  Wang]{wang2023equivariant}
Wang, T., Lin, K., Li, L., Lin, C.-C., Yang, Z., Zhang, H., Liu, Z., and Wang,
  L.
\newblock Equivariant similarity for vision-language foundation models.
\newblock In \emph{Proceedings of the IEEE/CVF International Conference on
  Computer Vision}, pp.\  11998--12008, 2023.

\bibitem[Wang et~al.(2019)Wang, Jabri, and Efros]{wang2019learning}
Wang, X., Jabri, A., and Efros, A.~A.
\newblock Learning correspondence from the cycle-consistency of time.
\newblock In \emph{Proceedings of the IEEE/CVF conference on computer vision
  and pattern recognition}, pp.\  2566--2576, 2019.

\bibitem[Wang et~al.(2022)Wang, Wei, Schuurmans, Le, Chi, Narang, Chowdhery,
  and Zhou]{wang2022self}
Wang, X., Wei, J., Schuurmans, D., Le, Q., Chi, E., Narang, S., Chowdhery, A.,
  and Zhou, D.
\newblock Self-consistency improves chain of thought reasoning in language
  models.
\newblock \emph{arXiv preprint arXiv:2203.11171}, 2022.

\bibitem[Wei et~al.(2023)Wei, Wang, Schuurmans, Bosma, Ichter, Xia, Chi, Le,
  and Zhou]{wei2023chainofthoughtpromptingelicitsreasoning}
Wei, J., Wang, X., Schuurmans, D., Bosma, M., Ichter, B., Xia, F., Chi, E., Le,
  Q., and Zhou, D.
\newblock Chain-of-thought prompting elicits reasoning in large language
  models, 2023.
\newblock URL \url{https://arxiv.org/abs/2201.11903}.

\bibitem[Wu et~al.(2025)Wu, Zhang, Diao, Li, Lu, and Liu]{wu2025visualjigsaw}
Wu, P., Zhang, Y., Diao, H., Li, B., Lu, L., and Liu, Z.
\newblock Visual jigsaw post-training improves mllms, 2025.
\newblock URL \url{https://arxiv.org/abs/2509.25190}.

\bibitem[Yu et~al.(2025)Yu, Chen, Ju, Jia, Zhang, Huang, Wu, Cui, Ran, Zhang,
  et~al.]{yu2025far}
Yu, S., Chen, Y., Ju, H., Jia, L., Zhang, F., Huang, S., Wu, Y., Cui, R., Ran,
  B., Zhang, Z., et~al.
\newblock How far are {VLM}s from visual spatial intelligence? {A}
  benchmark-driven perspective.
\newblock \emph{arXiv preprint arXiv:2509.18905}, 2025.

\bibitem[Yue et~al.(2025)Yue, Chen, Lu, Zhao, Wang, Song, and
  Huang]{yue2025does}
Yue, Y., Chen, Z., Lu, R., Zhao, A., Wang, Z., Song, S., and Huang, G.
\newblock Does reinforcement learning really incentivize reasoning capacity in
  llms beyond the base model?
\newblock \emph{arXiv preprint arXiv:2504.13837}, 2025.

\bibitem[Zhang et~al.(2024)Zhang, Xiao, Huang, Fan, Dong, Li, Wang, Cheng,
  Zhang, and Guo]{zhang2024unveiling}
Zhang, Y., Xiao, F., Huang, T., Fan, C.-K., Dong, H., Li, J., Wang, J., Cheng,
  K., Zhang, S., and Guo, H.
\newblock Unveiling the tapestry of consistency in large vision-language
  models.
\newblock \emph{Advances in Neural Information Processing Systems},
  37:\penalty0 118632--118653, 2024.

\bibitem[Zhang et~al.(2025)Zhang, Zhu, Ge, Zhao, Zhou, Li, Feng, Yao, and
  Han]{zhang2025co}
Zhang, Z., Zhu, J., Ge, X., Zhao, Z., Zhou, Z., Li, X., Feng, X., Yao, J., and
  Han, B.
\newblock Co-rewarding: Stable self-supervised rl for eliciting reasoning in
  large language models.
\newblock \emph{arXiv preprint arXiv:2508.00410}, 2025.

\bibitem[Zhao et~al.(2025)Zhao, Kang, Feng, Levine, and
  Song]{zhao2025learningreasonexternalrewards}
Zhao, X., Kang, Z., Feng, A., Levine, S., and Song, D.
\newblock Learning to reason without external rewards, 2025.
\newblock URL \url{https://arxiv.org/abs/2505.19590}.

\bibitem[Zhou et~al.(2016)Zhou, Krahenbuhl, Aubry, Huang, and
  Efros]{zhou2016learning}
Zhou, T., Krahenbuhl, P., Aubry, M., Huang, Q., and Efros, A.~A.
\newblock Learning dense correspondence via 3d-guided cycle consistency.
\newblock In \emph{Proceedings of the IEEE conference on computer vision and
  pattern recognition}, pp.\  117--126, 2016.

\bibitem[Zhu et~al.(2017)Zhu, Park, Isola, and Efros]{zhu2017unpaired}
Zhu, J.-Y., Park, T., Isola, P., and Efros, A.~A.
\newblock Unpaired image-to-image translation using cycle-consistent
  adversarial networks.
\newblock In \emph{Proceedings of the IEEE international conference on computer
  vision}, pp.\  2223--2232, 2017.

\bibitem[Zuo et~al.(2025)Zuo, Zhang, Sheng, Qu, Cui, Zhu, Li, Zhang, Long, Hua,
  Qi, Sun, Ma, Yuan, Ding, and Zhou]{zuo2025ttrltesttimereinforcementlearning}
Zuo, Y., Zhang, K., Sheng, L., Qu, S., Cui, G., Zhu, X., Li, H., Zhang, Y.,
  Long, X., Hua, E., Qi, B., Sun, Y., Ma, Z., Yuan, L., Ding, N., and Zhou, B.
\newblock Ttrl: Test-time reinforcement learning, 2025.
\newblock URL \url{https://arxiv.org/abs/2504.16084}.

\end{thebibliography}
\bibliographystyle{icml2026}

\newpage
\appendix
\onecolumn

\section{Further Analysis of the Minimal Consistency Pairing}
\label{app:minimal-consistency}

\subsection{Consistency Pairing Under a Random Baseline}

This appendix extends the analysis of pairing strategies introduced in \cref{sec:learning-consistency}. We investigate how different strategies behave under an \emph{uninformative} model---one that guesses uniformly at random, ignoring the transformation relationship between prompts. By deriving the expected per-completion reward $\mathbb{E}[r_i]$ under this random baseline, we compare how permissive each strategy is: high reward under random guessing indicates susceptibility to reward hacking, while low reward indicates a stronger learning signal.

\paragraph{Setup and Notation.}
Let $x$ be an original prompt and $x' = T(x)$ an augmented version obtained by applying a transformation $T$. We sample $K$ binary completions from each prompt:
\[
y_1, \ldots, y_K \sim \pi_\theta(\cdot \mid x), \qquad y'_1, \ldots, y'_K \sim \pi_\theta(\cdot \mid x'),
\]
where each completion $y_i, y'_j \in \{\text{True}, \text{False}\}$.

As described in \cref{sec:verifiers}, each transformation $T$ induces a known mapping $\phi_T$ on the answer space: if $y^\star$ is the correct answer to $x$, then $\phi_T(y^\star)$ is the correct answer to $x'$. For binary tasks, $\phi_T$ is either the identity (invariant transformations, where answers should match) or negation (equivariant transformations, where answers should differ). The consistency verifier rewards completions that satisfy the expected relationship:
\[
\verif_T(y, y') = \mathbf{1}\{y' = \phi_T(y)\} = 
\begin{cases}
\mathbf{1}\{y = y'\} & \text{if } \phi_T = \mathrm{id} \text{ (invariant)}, \\
\mathbf{1}\{y \neq y'\} & \text{if } \phi_T = \neg \text{ (equivariant)}.
\end{cases}
\]

For the analysis below, we consider the equivariant case $\verif_T(y, y') = \mathbf{1}\{y \neq y'\}$ without loss of generality; the invariant case is symmetric since $\mathbb{P}(y = y') = \mathbb{P}(y \neq y') = 1/2$ under independent unbiased bits.

\paragraph{Pairing Strategies.}
Given $K$ completions from each prompt, we have a $K \times K$ matrix of verifier scores $V_{ij} = \verif_T(y_i, y'_j)$. Different strategies aggregate this matrix into per-completion rewards $r_i$.

\paragraph{Random Pairing.} Pair completions by generation order---the $i$-th completion from $x$ with the $i$-th completion from $x'$. The per-completion reward is:
\[
r_i^{\mathrm{rand}} = \verif_T(y_i, y'_i).
\]
This strategy is simple but arbitrary: the reward depends on the sampling order rather than the content of the completions.

\paragraph{One-to-All.} Compare each completion from $x$ to every completion from $x'$ and average:
\[
r_i^{\mathrm{all}} = \frac{1}{K} \sum_{j=1}^{K} \verif_T(y_i, y'_j).
\]
Averaging over all pairs reduces variance but can dilute the signal: a few strong inconsistencies may be hidden when most pairs happen to be consistent.

\paragraph{Minimal Consistency (OT-GRPO).} Find the permutation $\sigma^\star \in \mathcal{S}_K$ (the symmetric group on $K$ elements) that \emph{minimizes} the total consistency score:
\[
\sigma^\star \in \argmin_{\sigma \in \mathcal{S}_K} \sum_{i=1}^{K} \verif_T(y_i, y'_{\sigma(i)}), \quad r_i^{\mathrm{OT}} = \verif_T(y_i, y'_{\sigma^\star(i)})
\]
By construction, minimal consistency pairs each completion with its most challenging counterpart.

\paragraph{Random Baseline.} To calibrate these rewards, we evaluate their expected value when the model behaves as a random guesser. Identifying True $\mapsto 1$ and False $\mapsto 0$, we assume completions are independent unbiased bits:
\[
y_i \stackrel{\mathrm{iid}}{\sim} \mathrm{Bernoulli}(1/2), \qquad y'_j \stackrel{\mathrm{iid}}{\sim} \mathrm{Bernoulli}(1/2),
\]
with independence across the two groups.

\begin{proposition}{Expected Reward Under Random Baseline.}{random-baseline}
Under the random baseline, the expected per-completion rewards for the three pairing strategies are:
\begin{enumerate}[label=(\roman*)]
\item \textbf{Random pairing:} $\mathbb{E}[r_i^{\mathrm{rand}}] = \frac{1}{2}$,
\item \textbf{One-to-all:} $\mathbb{E}[r_i^{\mathrm{all}}] = \frac{1}{2}$,
\item \textbf{Minimal consistency:} $\mathbb{E}[r_i^{\mathrm{OT}}] \approx \frac{1}{\sqrt{\pi K}}$.
\end{enumerate}
In particular, only minimal consistency penalizes random guessing, with expected reward vanishing as $O(1/\sqrt{K})$.
\end{proposition}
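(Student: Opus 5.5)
Parts (i) and (ii) follow from linearity of expectation. For (i), $y_i$ and $y'_i$ are independent fair bits, so $\mathbb{P}(y_i \neq y'_i) = 1/2$. For (ii), every term $\verif_T(y_i, y'_j)$ likewise has mean $1/2$, so the average $r_i^{\mathrm{all}}$ has mean $1/2$. The invariant case is identical, since $\mathbb{P}(y=y')=1/2$ as well.

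For (iii) I will first compute the optimal assignment value in closed form. Let $S=\sum_i y_i$ be the number of True answers in the original group and $S'=\sum_j y'_j$ the number in the augmented group. In the equivariant case, a matched pair is consistent exactly when it joins a True with a False. To minimize the number of such pairs, the matching should pair Trues with Trues and Falses with Falses as much as possible.

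I will verify that exactly $|S-S'|$ disagreeing pairs are unavoidable and that this is attained. Suppose $S \ge S'$. Match all $S'$ Trues of $y'$ with Trues of $y$, and all $K-S$ Falses of $y$ with Falses of $y'$. The leftover $S-S'$ Trues of $y$ must then go to Falses of $y'$. The lower bound is a counting argument: at most $\min(S,S')$ True–True pairs and at most $\min(K-S,K-S')$ False–False pairs can be formed. This gives
\[
\min_{\sigma}\sum_i \verif_T(y_i,y'_{\sigma(i)}) = |S-S'|.
\]
By Birkhoff--von Neumann, the empirical OT value coincides with this permutation optimum. For the invariant case the same reasoning gives $|S-(K-S')|$, and $K-S'$ has the same law as $S'$, so the two cases agree. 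Averaging over $i$ gives
\[
\frac1K\sum_i \mathbb{E}[r_i^{\mathrm{OT}}]=\frac{\mathbb{E}|S-S'|}{K}.
\]
Strictly, this statement is about the average per-completion reward, because $\sigma^\star$ may be non-unique and break the symmetry between indices. I will either state it that way or take $\sigma^\star$ uniformly at random among the minimizers, which makes all $i$ exchangeable.

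Next I evaluate $\mathbb{E}|S-S'|$. Here $S,S'$ are i.i.d. $\mathrm{Bin}(K,1/2)$. Write $D=S-S'$; then $D$ has mean $0$ and variance $K/2$. In fact $S-S' = S-K+(K-S')$, which is distributed as $\mathrm{Bin}(2K,1/2)-K$, so $D+K\sim \mathrm{Bin}(2K,1/2)$. By the central limit theorem, $D/\sqrt{K/2}\to \mathcal{N}(0,1)$. Uniform integrability holds because the second moments are bounded, so
\[
\mathbb{E}|D|\sim \sqrt{K/2}\,\sqrt{2/\pi}=\sqrt{K/\pi}.
\]
Dividing by $K$ gives $\mathbb{E}[r_i^{\mathrm{OT}}]\approx 1/\sqrt{\pi K}$, which is $O(1/\sqrt K)$.

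For a sharper, non-asymptotic statement I can use the exact identity for the symmetric binomial mean absolute deviation,
\[
\mathbb{E}|D| = \frac{2K\binom{2K}{K}}{2^{2K+1}} = K\binom{2K}{K}4^{-K}.
\]
Combined with the Stirling bound $\binom{2K}{K}4^{-K}\le 1/\sqrt{\pi K}$, this makes $1/\sqrt{\pi K}$ an upper bound for every $K$, and it is asymptotically tight.

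I expect the main obstacle to be the combinatorial reduction of the assignment value to $|S-S'|$ together with the per-index symmetry issue. The asymptotic step is routine once the exact binomial identity is in hand.
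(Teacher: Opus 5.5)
Your proposal is correct and follows essentially the same route as the paper. You reduce the minimal-matching value to $|S-S'|$ by pairing equal bits, then evaluate $\mathbb{E}|S-S'|$ via the central limit theorem with $\mathrm{Var}(S-S')=K/2$ and $\mathbb{E}|Z|=\sigma\sqrt{2/\pi}$. Your additions make rigorous several points the paper's proof leaves informal: the explicit counting lower bound for the assignment, the invariant case via $K-S'$ having the law of $S'$, and the caveat that only the group-averaged reward (or the per-index reward under uniform tie-breaking) has expectation $\mathbb{E}|S-S'|/K$. The same holds for uniform integrability, and for the exact identity $\mathbb{E}|S-S'|=K\binom{2K}{K}4^{-K}$, which makes $1/\sqrt{\pi K}$ a non-asymptotic upper bound.
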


\begin{proof}
\textit{(i) Random pairing.} For each $i$, the pair $(y_i, y'_i)$ consists of two independent unbiased bits. Hence, $\mathbb{P}(y_i \neq y'_i) = \frac{1}{2}$. Since this holds for each $i$, we have $\mathbb{E}[r_i^{\mathrm{rand}}] = \frac{1}{2}$.

\textit{(ii) One-to-all.} The same argument applies to any pair $(y_i, y'_j)$: since all completions are independent, $\mathbb{P}(y_i \neq y'_j) = \frac{1}{2}$ for all $i, j$. Therefore $\mathbb{E}[r_i^{\mathrm{all}}] = \frac{1}{2}$.

\textit{(iii) Minimal consistency.} This case is more subtle. Define the counts:
\[
S = \sum_{i=1}^{K} y_i, \qquad S' = \sum_{j=1}^{K} y'_j,
\]
so that $S$ and $S'$ are the number of True in each group. Under the random baseline, $S, S' \sim \mathrm{Bin}(K, 1/2)$ independently.

To minimize the number of disagreeing pairs, the optimal strategy matches identical bits whenever possible. Suppose $S \geq S'$ (the case $S' > S$ is symmetric). We match all $S'$ True values from the augmented group with True values from the original group (contributing 0 to the verifier), and similarly for False values. The remaining $S - S'$ True values must pair with False values, creating exactly $|S - S'|$ disagreements:
\[
\min_{\sigma \in \mathcal{S}_K} \sum_{i=1}^{K} \mathbf{1}\{y_i \neq y'_{\sigma(i)}\} = |S - S'|.
\]
The average reward is therefore $\frac{1}{K}\sum_i r_i^{\mathrm{OT}} = |S - S'|/K$. To compute its expectation, we analyze $|S - S'|$. Write the difference as a sum of centered terms:
\[
S - S' = \sum_{i=1}^{K} \Big(y_i - \tfrac{1}{2}\Big) - \sum_{j=1}^{K} \Big(y'_j - \tfrac{1}{2}\Big),
\]
where each summand $(y_i - \frac{1}{2})$ or $(y'_j - \frac{1}{2})$ equals $\pm \frac{1}{2}$ with equal probability. This is a sum of $2K$ independent, mean-zero, bounded random variables. As $S, S' \sim \mathrm{Bin}(K, 1/2)$ independently, the variance is:
\[
\mathrm{Var}(S - S') = K \cdot \frac{1}{4} + K \cdot \frac{1}{4} = \frac{K}{2}.
\]
By the central limit theorem:
\[
\frac{S - S'}{\sqrt{K/2}} \xrightarrow{d} \mathcal{N}(0, 1).
\]
For a random variable $Z \sim \mathcal{N}(0, \sigma^2)$, we have $\mathbb{E}|Z| = \sigma \sqrt{2/\pi}$. Taking $\sigma = \sqrt{K/2}$ gives:
\[
\mathbb{E}|S - S'| \approx \sqrt{\frac{K}{2}} \cdot \sqrt{\frac{2}{\pi}} = \sqrt{\frac{K}{\pi}}.
\]
Therefore:
\[
\mathbb{E}[r_i^{\mathrm{OT}}] = \frac{\mathbb{E}|S - S'|}{K} \approx \frac{1}{\sqrt{\pi K}}.
\]
\end{proof}

Table~\ref{tab:pairing-comparison} summarizes these results. Figure~\ref{fig:pairing-expected-reward} visualizes this comparison. For typical group sizes ($K = 8$ to $32$), minimal consistency yields expected rewards between $0.20$ and $0.10$---well below the $0.50$ baseline of the other strategies.

\begin{table}[h]
\centering
\caption{Expected per-completion reward under the random baseline (uninformative model).}
\label{tab:pairing-comparison}
\begin{tabular}{lcc}
\toprule
\textbf{Strategy} & \textbf{$\mathbb{E}[r_i]$} & \textbf{Scaling with $K$} \\
\midrule
Random Pairing & $\mathbb{E}[r_i^{\mathrm{rand}}] = 1/2$ & Constant \\
One-to-All & $\mathbb{E}[r_i^{\mathrm{all}}] = 1/2$ & Constant \\
Minimal Consistency & $\mathbb{E}[r_i^{\mathrm{OT}}] \approx 1/\sqrt{\pi K}$ & $O(1/\sqrt{K})$ \\
\bottomrule
\end{tabular}
\end{table}

\begin{figure}[h]
\centering
\includegraphics[width=0.5\textwidth]{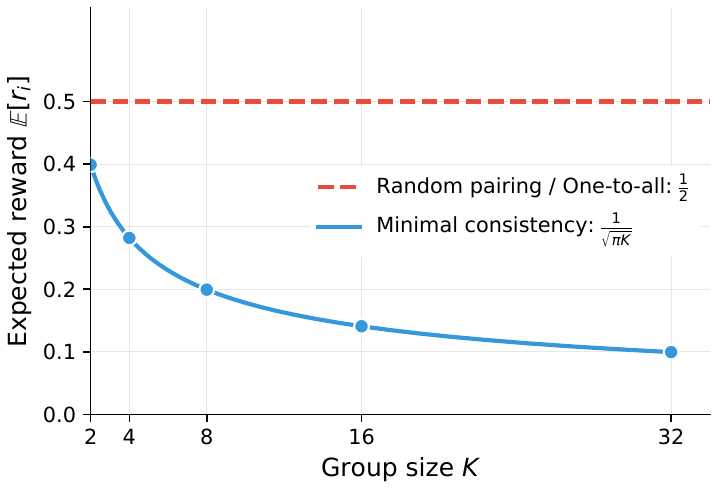}
\caption{Expected per-completion reward under the random baseline as a function of group size $K$. Random pairing and one-to-all remain at $1/2$ regardless of $K$, while minimal consistency decays as $1/\sqrt{\pi K}$.}
\label{fig:pairing-expected-reward}
\end{figure}

\paragraph{Interpretation.} Only minimal consistency penalizes uninformative models: as $K$ grows, the expected reward for a random guesser vanishes as $O(1/\sqrt{K})$. The key insight is that minimal consistency actively searches for the \emph{worst-case} pairing. A random model produces roughly equal numbers of True and False in each group, so the optimal matching pairs most completions with identical answers (which fail the verifier). Only the ``leftover'' completions---the imbalance $|S - S'| \sim \sqrt{K}$---contribute to the reward. In contrast, random pairing and one-to-all treat all pairs equally, allowing a random model to benefit from the $50\%$ chance that any given pair happens to satisfy the verifier. 

\paragraph{Conclusion.} Minimal consistency makes the consistency verification task intrinsically harder and the reward signal less susceptible to random guessing. This property is particularly valuable in our label-free setting, where we cannot rely on ground-truth accuracy to filter out uninformative behavior. By using minimal consistency, we ensure that high rewards reflect genuine cross-prompt agreement rather than statistical coincidence.

\subsection{Wasserstein Reformulation}
\label{app:minimal-consistency-wasserstein}

Beyond the random-baseline analysis, minimal consistency admits an interpretation through optimal transport theory. We show that maximizing the minimal-consistency reward is equivalent to minimizing a Wasserstein distance between the model's output distributions on $x$ and $x'$, with transport cost given by the negative verifier $c_T = -\verif_T$.

\begin{definition}{Wasserstein Distance.}{}
Given two probability measures $\pi$ and $\pi'$ on a space $\mathcal{Y}$, and a cost function $c: \mathcal{Y} \times \mathcal{Y} \to \mathbb{R}$, the \emph{Wasserstein distance} (or optimal transport cost) is:
\[
W_c(\pi, \pi') \coloneqq \min_{\gamma \in \Gamma(\pi, \pi')} \mathbb{E}_{(y, y') \sim \gamma}\big[c(y, y')\big],
\]
where $\Gamma(\pi, \pi')$ denotes the set of couplings---joint distributions on $\mathcal{Y} \times \mathcal{Y}$ with marginals $\pi$ and $\pi'$.
\end{definition}

\begin{proposition}{Wasserstein Reformulation of OT-GRPO.}{wasserstein}
Let $\pi_\theta^x \coloneqq \pi_\theta(\cdot \mid x)$ and $\pi_\theta^{x'} \coloneqq \pi_\theta(\cdot \mid x')$ denote the model's output distributions, and define the inconsistency cost $c_T \coloneqq -\verif_T$. Then the OT-GRPO objective is equivalent to minimizing the expected Wasserstein distance:
\[
\min_\theta\; \mathbb{E}_{\substack{x \sim \mathcal{D},\, T \sim \mathcal{T} \\ x' = T(x)}}\; \big[ W_{c_T}(\pi_\theta^x, \pi_\theta^{x'}) \big].
\]
\end{proposition}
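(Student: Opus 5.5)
The plan is a direct rewriting of the max--min objective~\eqref{eq:maxmin_objective} in the language of the Definition of $W_c$. Nothing beyond linearity of expectation and the sign identity $\max_\theta F(\theta) = -\min_\theta(-F(\theta))$ is needed. The work is in identifying the inner coupling problem with the optimal transport value for the cost $c_T=-\verif_T$. There is a sign/orientation point in that identification which I expect to be the main obstacle; I flag it below rather than resolve it.

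\textbf{Step 1.} Fix $(x,T)$ and write the inner quantity of~\eqref{eq:maxmin_objective} as
\[
\Phi_\theta(x,T) \coloneqq \min_{\gamma\in\Gamma(\pi_\theta^x,\pi_\theta^{x'})}\mathbb{E}_{(y,y')\sim\gamma}[\verif_T(y,y')].
\]
Existence of minimizers is immediate. For binary (or finitely supported) answers, $\Gamma(\pi_\theta^x,\pi_\theta^{x'})$ is a compact polytope and the objective is linear in $\gamma$. In general one uses weak compactness of $\Gamma$ and boundedness of $\verif_T\in[0,1]$.

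\textbf{Step 2.} Substitute $\verif_T=-c_T$ and use linearity of $\mathbb{E}_\gamma$ to get $\mathbb{E}_\gamma[\verif_T]=-\mathbb{E}_\gamma[c_T]$. By the definition of $W_{c_T}$, the optimization over $\Gamma(\pi_\theta^x,\pi_\theta^{x'})$ is now an optimal transport problem with cost $c_T$, so $\Phi_\theta(x,T)$ equals $-W_{c_T}(\pi_\theta^x,\pi_\theta^{x'})$.

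\textbf{Step 3.} Take the outer expectation over $x\sim\mathcal{D}$, $T\sim\mathcal{T}$, which commutes with the sign. Then apply $\max_\theta \mathbb{E}[-W_{c_T}] = -\min_\theta \mathbb{E}[W_{c_T}]$. This shows that the maximizers of~\eqref{eq:maxmin_objective} coincide with the minimizers of the displayed objective in the proposition, which is the claimed equivalence.

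\textbf{Step 4 (empirical level).} At the level actually implemented in Alg.~\ref{alg:otgrpo}, I would repeat the argument with the empirical measures $\frac1K\sum_i\delta_{y_i}$ and $\frac1K\sum_j\delta_{y'_j}$. By the Birkhoff--von Neumann theorem the coupling problem is attained at a permutation. The permutation $\sigma^\star$ in the definition of $r_i^{\mathrm{OT}}$ is then the optimal transport plan, and $\frac1K\sum_i r_i^{\mathrm{OT}}$ equals the empirical coupling value. So the OT-GRPO rewards are exactly the per-sample contributions to the empirical Wasserstein objective.

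\textbf{Main obstacle.} Step~2 is the delicate one. Negating the integrand turns $\min_\gamma$ into $\max_\gamma$, and the Definition of $W_c$ is stated with a $\min$. I would first check carefully that the coupling selected by the adversarial pairing of \S\ref{sec:learning-consistency} is the one realizing $W_{c_T}$ in the convention of the Definition. If not, the identification in Step~2 has to be made at the level of the optimal plan $\sigma^\star$ (treated as fixed, as in the Danskin argument following~\eqref{eq:ot_grpo}) rather than by a bare sign flip. Once that matching is pinned down, Steps~3--4 are routine.
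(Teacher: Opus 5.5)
Your Step~2 is false, and the check you defer in your ``main obstacle'' paragraph comes out negative. Since $\verif_T=-c_T$,
\[
\Phi_\theta(x,T)=\min_{\gamma\in\Gamma(\pi_\theta^x,\pi_\theta^{x'})}\mathbb{E}_{\gamma}[-c_T]=-\max_{\gamma\in\Gamma(\pi_\theta^x,\pi_\theta^{x'})}\mathbb{E}_{\gamma}[c_T],
\]
which is minus the \emph{largest} transport cost, not $-W_{c_T}$. The adversarial plan $\sigma^\star$ is the coupling that maximizes $\mathbb{E}_\gamma[c_T]$. It is therefore the worst plan for the cost $c_T$ and does not realize $W_{c_T}$. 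Your fallback does not help either. Treating $\sigma^\star$ as fixed (Danskin) justifies differentiating through the inner optimum, but it cannot change which extremum is being evaluated. Step~4 has the same problem, since $\sigma^\star$ is an optimal plan for the cost $\verif_T$, not for $c_T$.

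No repair is possible, because the proposition as stated is false. Take binary answers and an equivariant $T$, so $\verif_T(y,y')=\mathbf{1}\{y\neq y'\}$. Let $p,p'$ be the probabilities that $\pi_\theta^x,\pi_\theta^{x'}$ answer True. Then $\min_\gamma\mathbb{E}_\gamma[\verif_T]=|p-p'|$, while $W_{c_T}=-\max_\gamma\mathbb{P}_\gamma(y\neq y')=-\min(p+p',\,2-p-p')$. The uniform guesser $p=p'=1/2$ attains the global minimum $W_{c_T}=-1$ (as $c_T\ge -1$), yet its OT-GRPO value is $0$, the worst possible. So minimizing $W_{c_T}$ rewards the \emph{most favorable} pairing, which is exactly the lucky alignment the adversarial pairing was introduced to exclude.

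The paper's proof takes the same route as yours: identify the inner problem with $W_{\verif_T}$, negate, and absorb the sign into the cost. It fails at the same point, since its last identity $-W_{\verif_T}=W_{-\verif_T}$ is false for the reason above.

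What your Steps~1, 3 and~4 do establish, with no sign flip, is the corrected statement $\Phi_\theta(x,T)=W_{\verif_T}(\pi_\theta^x,\pi_\theta^{x'})$. Thus OT-GRPO is $\max_\theta\mathbb{E}[W_{\verif_T}(\pi_\theta^x,\pi_\theta^{x'})]$. Equivalently, it is $\min_\theta\mathbb{E}[\max_\gamma\mathbb{E}_\gamma[c_T]]$, a worst-case inconsistency that is not a Wasserstein distance. At the empirical level, $\frac1K\sum_i r_i^{\mathrm{OT}}$ equals $W_{\verif_T}$ between the two empirical measures.
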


\begin{proof}
The initial pairwise consistency objective from \cref{sec:learning-consistency} is:
\[
\max_\theta\; \mathbb{E}_{\substack{x \sim \mathcal{D},\, T \sim \mathcal{T} \\ x' = T(x)}}\; \mathbb{E}_{\substack{y \sim \pi_\theta(\cdot \mid x) \\ y' \sim \pi_\theta(\cdot \mid x')}} \big[\verif_T(y, y')\big].
\]
In this formulation, $y$ and $y'$ are sampled independently. With minimal consistency pairing, however, completions are matched via the optimal transport coupling $\gamma^\star$ that minimizes expected consistency:
\[
\gamma^\star = \argmin_{\gamma \in \Gamma(\pi_\theta^x, \pi_\theta^{x'})} \mathbb{E}_{(y, y') \sim \gamma}\big[\verif_T(y, y')\big].
\]
Substituting this adversarial coupling yields a max-min objective:
\[
\max_\theta\; \mathbb{E}_{\substack{x \sim \mathcal{D},\, T \sim \mathcal{T} \\ x' = T(x)}}\; \min_{\gamma \in \Gamma(\pi_\theta^x, \pi_\theta^{x'})} \mathbb{E}_{(y, y') \sim \gamma}\big[\verif_T(y, y')\big].
\]
Recognizing the inner minimization as the Wasserstein distance $W_{\verif_T}(\pi_\theta^x, \pi_\theta^{x'})$:
\[
\max_\theta\; \mathbb{E}_{\substack{x \sim \mathcal{D},\, T \sim \mathcal{T} \\ x' = T(x)}}\; W_{\verif_T}(\pi_\theta^x, \pi_\theta^{x'}).
\]
Converting maximization to minimization by negating:
\[
\min_\theta\; \mathbb{E}_{\substack{x \sim \mathcal{D},\, T \sim \mathcal{T} \\ x' = T(x)}}\; \big(-W_{\verif_T}(\pi_\theta^x, \pi_\theta^{x'})\big).
\]
Finally, absorbing the negative into the cost gives $-W_{\verif_T} = W_{-\verif_T} = W_{c_T}$, completing the proof.
\end{proof}

\paragraph{Interpretation.} This reformulation reveals that OT-GRPO seeks to \emph{align} the model's conditional distributions $\pi_\theta^x$ and $\pi_\theta^{x'}$ in the geometry defined by the Wasserstein distance with inconsistency cost $c_T = -\verif_T$. 

\section{Implementation Details}
\label{app:implementation}

\subsection{System Prompt}

As described in \cref{sec:learning-consistency}, we prompt the model with a system instruction adapted from DeepSeek-R1~\citep{guo2025deepseek} to encourage chain-of-thought reasoning. The exact system prompt is:

\begin{quote}
\textit{``A conversation between User and Assistant. The user asks a question, and the Assistant solves it. The assistant first thinks about the reasoning process in the mind and then provides the user with the answer. The reasoning process and answer are enclosed within \texttt{<think>} \texttt{</think>} and \texttt{<answer>} \texttt{</answer>} tags, respectively, i.e., \texttt{<think>} reasoning process here \texttt{</think>}\texttt{<answer>} answer here \texttt{</answer>}.''}
\end{quote}

The format reward is 1 if the completion contains both a valid \texttt{<think>...</think>} block and a valid \texttt{<answer>...</answer>} block with the answer being either ``True'' or ``False''. If either component is missing or malformed, the format reward is 0.

\subsection{Handling Unparseable Answers}
\label{app:unparseable}

During training, some completions may fail to produce a valid ``True'' or ``False'' answer (e.g., due to malformed output or refusal to answer). We handle these \emph{unparseable} completions as follows:

\begin{itemize}[leftmargin=*, itemsep=2pt]
    \item Unparseable completions are \textbf{excluded from the OT matching}. The optimal transport problem is solved only over completions with valid parsed answers.
    \item Unparseable completions receive a \textbf{consistency reward of 0}, which---after group normalization---translates to a negative advantage, discouraging the model from producing unparseable outputs.
\end{itemize}

This design ensures that the OT matching remains well-defined even when some completions are invalid, while providing a learning signal that encourages properly formatted outputs. The formal treatment of the matching problem---including how we handle unequal group sizes---is given in \cref{app:discrete-ot}.

\subsection{Discrete Optimal Transport for Completion Matching}
\label{app:discrete-ot}

This section details how we solve the minimal consistency matching in practice, particularly when unparseable completions lead to unequal group sizes. As discussed in \cref{app:unparseable}, unparseable completions are excluded, leaving $n_0$ valid completions $\{y_1, \ldots, y_{n_0}\}$ from the original prompt and $n_1$ valid completions $\{y'_1, \ldots, y'_{n_1}\}$ from the augmented prompt.

\begin{definition}{Discrete Optimal Transport for Completion Matching.}{discrete-ot}
Given $n_0$ valid completions $\{y_1, \ldots, y_{n_0}\}$ from the original prompt and $n_1$ valid completions $\{y'_1, \ldots, y'_{n_1}\}$ from the augmented prompt, we seek a coupling matrix $\gamma \in \mathbb{R}^{n_0 \times n_1}$ that minimizes the total consistency cost:
\[
\gamma^\star = \argmin_{\gamma \geq 0} \sum_{i=1}^{n_0} \sum_{j=1}^{n_1} \gamma_{ij} \cdot \verif_T(y_i, y'_j) \quad \text{s.t.} \quad \sum_{j=1}^{n_1} \gamma_{ij} = \frac{1}{n_0}, \quad \sum_{i=1}^{n_0} \gamma_{ij} = \frac{1}{n_1}.
\]
The constraints ensure mass conservation, namely, that each original completion has total outgoing mass $1/n_0$ and each augmented completion has total incoming mass $1/n_1$.
\end{definition} 

\paragraph{Equal group sizes ($n_0 = n_1 = K$).} When all completions are parseable, the feasible set is the \emph{Birkhoff polytope}---the set of doubly stochastic matrices (scaled by $1/K$). By the Birkhoff--von Neumann theorem~\citep{birkhoff1946tres}, its vertices are precisely the permutation matrices. Since we minimize a linear cost, the optimum $\gamma^\star$ is attained at a vertex, corresponding to a permutation $\sigma^\star \in \mathcal{S}_K$. This recovers the formulation in \cref{sec:learning-consistency}.

\paragraph{Unequal group sizes ($n_0 \neq n_1$).} When some completions are unparseable (by not following the desired format), the marginal constraints differ and the feasible set is no longer the Birkhoff polytope. The optimal coupling $\gamma^\star$ may assign fractional mass from one completion to multiple partners.

\paragraph{Extracting deterministic assignments.} To obtain per-completion rewards, we extract a deterministic assignment from $\gamma^\star$ via argmax:
\[
\text{For } y_i: \quad j^\star(i) = \argmax_{j \in \{1,\ldots,n_1\}} \gamma^\star_{ij}, \qquad
\text{For } y'_j: \quad i^\star(j) = \argmax_{i \in \{1,\ldots,n_0\}} \gamma^\star_{ij}.
\]
The reward for $y_i$ is $\verif_T(y_i, y'_{j^\star(i)})$, and similarly for $y'_j$. When $n_0 = n_1$, the optimal coupling is a permutation and argmax recovers the unique partner. When $n_0 \neq n_1$, argmax selects the partner receiving the largest transport mass.

This formulation gracefully handles missing data: with all completions parseable, we recover the permutation matching; with some unparseable, we solve the unbalanced OT problem and extract assignments accordingly.

\section{Qualitative Example}
\label{app:qualitative-example}

Figure~\ref{fig:depth-example} shows the model's reasoning before and after consistency training on a depth comparison task from SUN RGB-D. Before training, the model relies on a flawed heuristic (vertical position in the image) and produces an incorrect answer. After training, the model correctly reasons about 3D spatial relationships and arrives at the correct answer.

\begin{figure}[h]
\centering
\includegraphics[width=\textwidth]{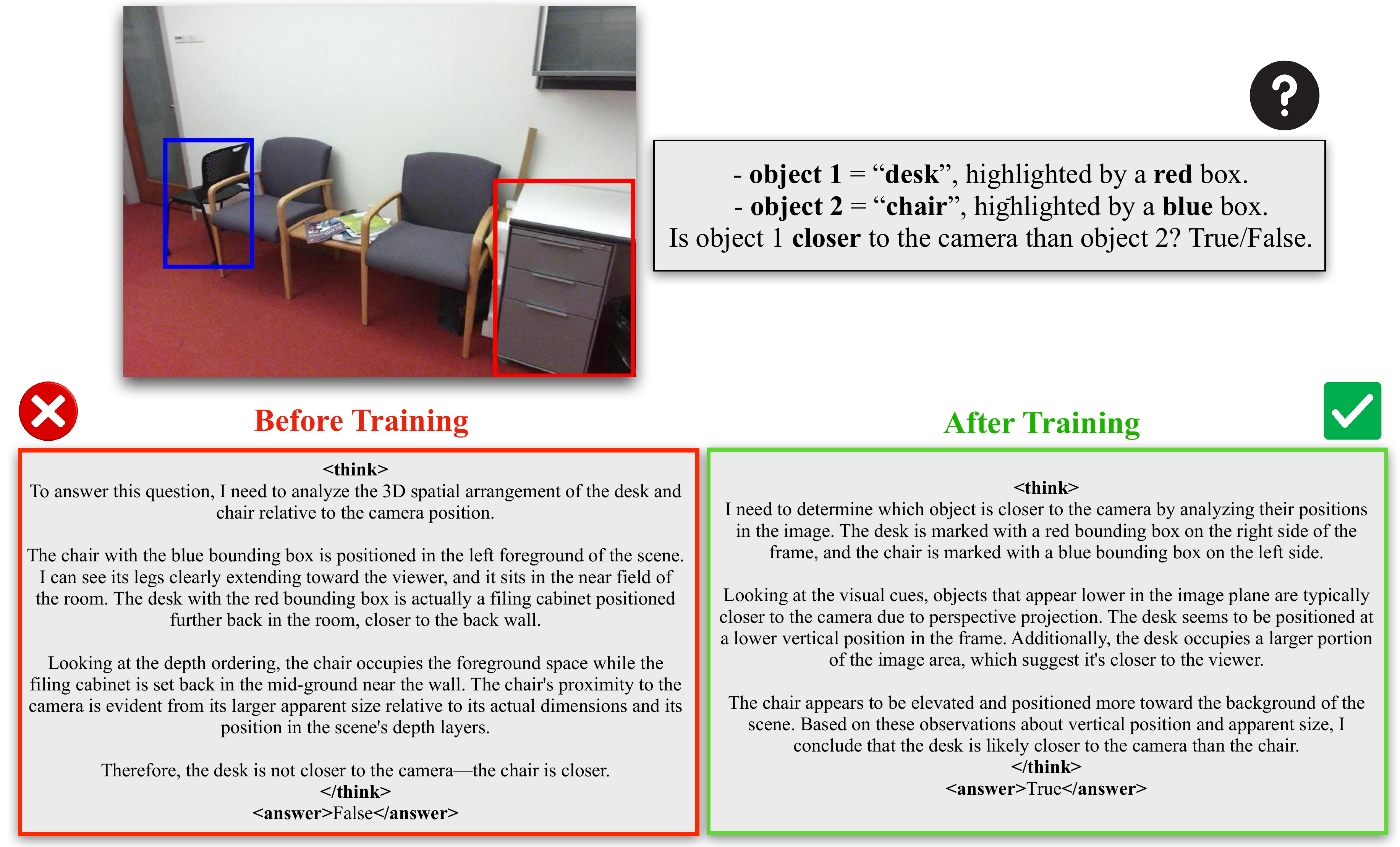}
\caption{Qualitative example showing how consistency training improves spatial reasoning. Given a depth comparison question from SUN RGB-D, the Qwen2.5-VL-7B model initially produces an incorrect answer based on flawed heuristics (left), while after training, the model reasons correctly about 3D spatial relationships and arrives at the correct answer (right).}
\label{fig:depth-example}
\end{figure}

\section{Dataset Construction}
\label{app:dataset-construction}

Spatial reasoning tasks require unambiguous ground-truth answers to train and evaluate models effectively. We construct our datasets from images with 3D object annotations, applying careful filtering to select object pairs with clear spatial relationships. This section describes the data sources, task definitions, selection criteria, and transformations used to generate training examples.

\subsection{Source Data}

\paragraph{Omni3D Annotations.} Spatial reasoning tasks require ground-truth 3D information to establish unambiguous answers. We use annotations from Omni3D~\citep{brazil2023omni3d}, a large-scale benchmark that provides unified 3D bounding boxes across multiple RGB datasets. Each annotation includes a 9-DoF 3D bounding box (position, orientation, dimensions) in camera coordinates, along with 2D projections and visibility estimates. All 3D quantities are expressed in camera coordinates, where the x-axis points right, the y-axis points down, and the z-axis points forward into the scene. Table~\ref{tab:source-datasets} compares the two source datasets.

\begin{table}[h]
\centering
\small
\caption{Comparison of source datasets. KITTI provides outdoor driving scenes with large depth ranges; SUN RGB-D provides cluttered indoor scenes with diverse object arrangements.}
\label{tab:source-datasets}
\begin{tabular}{lll}
\toprule
\textbf{Property} & \textbf{KITTI} & \textbf{SUN RGB-D} \\
\midrule
Domain & Outdoor (driving) & Indoor (rooms) \\
Camera setup & Forward-facing stereo & RGB-D sensors \\
Depth range & 5--100\,m & 0.5--10\,m \\
Typical objects & Cars, pedestrians, cyclists & Furniture, appliances \\
Scene structure & Lane-based, sparse & Cluttered, dense \\
\bottomrule
\end{tabular}
\end{table}

\paragraph{KITTI (Outdoor).} We use KITTI~\citep{geiger2012cvpr}, an autonomous driving benchmark with forward-facing stereo images captured from a moving vehicle. Objects appear at depths from 5 to 100 meters, and the clear lane structure provides unambiguous left/right relationships.

\paragraph{SUN RGB-D (Indoor).} We use SUN RGB-D~\citep{song2015sun}, a collection of RGB-D data from diverse indoor environments. Objects appear at closer range (0.5--10\,m), scenes are more cluttered with frequent occlusions, and spatial arrangements are more varied. The contrast between domains allows us to evaluate cross-domain generalization.

\subsection{Task Definitions}

We define four spatial reasoning tasks, each formulated as a binary True/False question. All tasks follow the same structure: given an image with highlighted objects, the model must determine whether a stated spatial relationship holds. The tasks are designed to cover different aspects of 3D spatial understanding: horizontal position (orientation), camera-relative depth, inter-object distance, and physical size.

% \paragraph{Visual Markers.} To reference objects without leaking task-relevant information, we use different markers depending on the task. For orientation and size tasks, we use colored dots, since bounding boxes would reveal horizontal position or object size. For depth and relative distance tasks, we use bounding boxes, since 2D box dimensions correlate only weakly with depth or 3D distance. Table~\ref{tab:task-summary} summarizes the four tasks.

\begin{table}[h]
\centering
\small
\caption{Summary of spatial reasoning tasks. Each task is a binary True/False question about the spatial relationship between highlighted objects.}
\label{tab:task-summary}
\begin{tabular}{llll}
\toprule
\textbf{Task} & \textbf{Example Question} & \textbf{Ground Truth} & \textbf{Marker} \\
\midrule
Orientation & Is A left of B? & 2D bbox center $x$ & Dots \\
Depth & Is A closer than B? & Min $z$ of 3D bbox & Boxes \\
Size & Is A bigger than B? & 3D volume & Dots \\
Rel.\ Distance & Is A closer to C than B? & 3D Euclidean dist. & Boxes \\
\bottomrule
\end{tabular}
\end{table}

\begin{table}[h]
\centering
\small
\caption{Dataset statistics. Number of training examples per task and source dataset, after filtering. Tasks may share images, so counts should not be summed across tasks.}
\label{tab:dataset-stats-full}
\begin{tabular}{lrr}
\toprule
\textbf{Task} & \textbf{KITTI} & \textbf{SUN RGB-D} \\
\midrule
Orientation & 2,406 & 7,738 \\
Depth & 4,288 & 6,222 \\
Size & 3,996 & 4,845 \\
Rel.\ Distance & 3,539 & 6,893 \\
\bottomrule
\end{tabular}
\end{table}

\paragraph{Orientation.} Ground truth is determined by comparing the x-coordinates of 2D bounding box centers. We use colored dots as markers to avoid revealing positional information through bounding box placement.

\paragraph{Depth.} Ground truth is the minimum z-coordinate across all eight 3D bounding box corners (using the minimum rather than the center handles objects extending toward the camera). We use bounding boxes as markers, since 2D box size correlates only weakly with depth.

\paragraph{Size.} Ground truth is 3D volume (width $\times$ height $\times$ length). We use dots as markers to avoid revealing size through bounding box dimensions.

\paragraph{Relative Distance.} This task involves three objects: an anchor and two comparison objects. Ground truth is the Euclidean distance between object centers. We select the anchor to maximize the distance gap between comparison objects.

\paragraph{Linguistic Variation.} For each task, we define five question templates that paraphrase the same underlying question and two relation phrases (e.g., ``left of'' / ``right of''). During training, templates and relations are sampled to encourage learning the spatial concept rather than pattern-matching specific phrasings. Table~\ref{tab:question-templates} lists all templates.

\begin{table}[h]
\centering
\small
\caption{Question templates and relation phrases for each task. \texttt{\{IDX0\}} and \texttt{\{IDX1\}} are placeholders for object indices; \texttt{\{REL\}} is replaced by one of the relation phrases listed.}
\label{tab:question-templates}
\begin{tabular}{p{1.8cm}p{2.8cm}p{8.2cm}}
\toprule
\textbf{Task} & \textbf{Relations} & \textbf{Templates} \\
\midrule
\textbf{Orientation} & 
\textbullet~\texttt{left of} \newline \textbullet~\texttt{right of} & 
\textbullet~\texttt{Is object \{IDX0\} to the \{REL\} object \{IDX1\}?} \newline
\textbullet~\texttt{In the image, is object \{IDX0\} \{REL\} object \{IDX1\}?} \newline
\textbullet~\texttt{Compared to object \{IDX1\}, is object \{IDX0\} to the \{REL\} it?} \newline
\textbullet~\texttt{Looking at the scene, is object \{IDX0\} \{REL\} object \{IDX1\}?} \newline
\textbullet~\texttt{Between the two objects, is object \{IDX0\} to the \{REL\} object \{IDX1\}?} \\
\midrule
\textbf{Depth} & 
\textbullet~\texttt{closer to} \newline \textbullet~\texttt{further from} & 
\textbullet~\texttt{Is object \{IDX0\} \{REL\} the camera than object \{IDX1\}?} \newline
\textbullet~\texttt{In terms of depth, is object \{IDX0\} \{REL\} the camera than object \{IDX1\}?} \newline
\textbullet~\texttt{Compared to object \{IDX1\}, is object \{IDX0\} \{REL\} the camera?} \newline
\textbullet~\texttt{Does object \{IDX0\} appear \{REL\} the camera than object \{IDX1\}?} \newline
\textbullet~\texttt{Between the two objects, is object \{IDX0\} \{REL\} the camera than object \{IDX1\}?} \\
\midrule
\textbf{Size} & 
\textbullet~\texttt{bigger than} \newline \textbullet~\texttt{smaller than} & 
\textbullet~\texttt{Is object \{IDX0\} \{REL\} object \{IDX1\}?} \newline
\textbullet~\texttt{In terms of size, is object \{IDX0\} \{REL\} object \{IDX1\}?} \newline
\textbullet~\texttt{Compared to object \{IDX1\}, is object \{IDX0\} \{REL\}?} \newline
\textbullet~\texttt{Does object \{IDX0\} appear \{REL\} object \{IDX1\}?} \newline
\textbullet~\texttt{Between the two objects, is object \{IDX0\} \{REL\} object \{IDX1\}?} \\
\midrule
\textbf{Rel.\ Distance} & 
\textbullet~\texttt{closer to} \newline \textbullet~\texttt{further from} & 
\textbullet~\texttt{Is object \{IDX0\} \{REL\} object 1 than object \{IDX1\}?} \newline
\textbullet~\texttt{In 3D space, is object \{IDX0\} \{REL\} object 1 than object \{IDX1\}?} \newline
\textbullet~\texttt{Compared to object \{IDX1\}, is object \{IDX0\} \{REL\} object 1?} \newline
\textbullet~\texttt{Looking at the scene, is object \{IDX0\} \{REL\} object 1 than object \{IDX1\}?} \\
\bottomrule
\end{tabular}
\end{table}

\paragraph{Answer Balance.} Because we sample relation phrases and object orders uniformly at random, the ground-truth answers are balanced: approximately 50\% True and 50\% False for each task in both datasets.

\subsection{Object and Pair Selection}

Not all object pairs yield useful training examples. Heavily occluded objects may be difficult to identify; objects with nearly identical depths create ambiguous comparisons; overlapping bounding boxes confuse left/right judgments. We apply a series of filters to select high-quality examples.

\paragraph{Single-Object Filters.}
Each object must satisfy validity criteria before being considered for pairing. Table~\ref{tab:single-filters} summarizes the thresholds. We require sufficient visibility (using the Omni3D visibility score), reasonable 2D bounding box size (large enough to identify, small enough not to dominate the frame), and---for KITTI---that the 3D center lies in front of the camera plane.

\begin{table}[h]
\centering
\small
\caption{Single-object filter thresholds. Objects failing any criterion are excluded.}
\label{tab:single-filters}
\begin{tabular}{lcc}
\toprule
\textbf{Filter} & \textbf{KITTI} & \textbf{SUN RGB-D} \\
\midrule
Min visibility & 15\% & 10\% \\
Min area (image fraction) & 0.5\% & 2\% \\
Max area (image fraction) & 70--80\% & 80\% \\
Reject behind camera & Yes & No \\
\bottomrule
\end{tabular}
\end{table}

\paragraph{Pair and Group Filters.}
When two objects overlap significantly, spatial relationships become ambiguous. We filter pairs by bounding box IoU (intersection over union) and coverage (fraction of one box contained in another). Stricter thresholds apply to same-class pairs, where visual confusion is more likely. Each task also requires sufficient separation along its relevant dimension. Table~\ref{tab:pair-filters} lists the thresholds.

\begin{table}[h]
\centering
\small
\caption{Pair/group filter thresholds. Pairs exceeding overlap limits or failing to meet minimum gaps are excluded.}
\label{tab:pair-filters}
\begin{tabular}{lcc}
\toprule
\textbf{Filter} & \textbf{KITTI} & \textbf{SUN RGB-D} \\
\midrule
Max IoU (any class) & 35\% & 30\% \\
Max IoU (same class) & 15\% & 10\% \\
Max coverage (any class) & 35\% & 30\% \\
Max coverage (same class) & 15\% & 10\% \\
\midrule
Min depth gap (Depth task) & 0.5\,m & 0.3\,m \\
Min volume gap (Size task) & 1.0\,m$^3$ & 0.5\,m$^3$ \\
Min distance gap (Rel.\ Dist.) & 0.5\,m & 0.3\,m \\
\bottomrule
\end{tabular}
\end{table}

\paragraph{Selection Strategy.}
After filtering, multiple valid pairs typically remain per image. We score each pair based on the metric gap and object size, then select according to a task-specific strategy.

\subsection{Transformations for Consistency Training}

The consistency training approach described in \cref{sec:learning-consistency} requires paired prompts where the relationship between correct answers is known without access to ground-truth labels. We achieve this through transformations applied to both images and questions, each with a known effect on the correct answer.

\paragraph{Image Transformations.} We apply three types of image transformations. Horizontal flips mirror the image left-to-right, swapping the horizontal positions of all objects; we update the stored 2D and 3D bounding box coordinates accordingly. Bounding-box-preserving crops randomly select a sub-region of the image while ensuring all annotated objects remain fully visible; the crop scale ranges from 70\% to 100\% of the original image. Color adjustments (brightness, contrast, saturation) modify the image appearance without affecting spatial relationships.

\paragraph{Text Transformations.} We vary the question formulation through three mechanisms. Template sampling selects among the five paraphrased question templates for each task. Relation swapping replaces the queried relation with its opposite (e.g., ``closer to'' becomes ``further from''). Object order swapping permutes which object is referred to as ``object 1'' versus ``object 2'' in the question. Each mechanism changes the surface form of the question while preserving or predictably altering its meaning.

\paragraph{Invariant Transformations.} Some transformations preserve the correct answer. Color adjustments do not affect spatial relationships, so the answer remains unchanged. Bounding-box-preserving crops maintain all objects and their relative positions, leaving the answer unchanged. These invariant transformations create prompt pairs where the consistency verifier expects matching answers.

\paragraph{Equivariant Transformations.} Other transformations predictably change the correct answer. A horizontal flip swaps left and right, negating the answer to orientation questions---if A was left of B before the flip, A is right of B after. Relation swapping negates the answer by asking the opposite question. Object order swapping negates the answer for symmetric relations: ``Is A closer than B?'' and ``Is B closer than A?'' have opposite answers. These equivariant transformations create prompt pairs where the verifier expects opposite answers. Table~\ref{tab:transform-properties} summarizes the transformation properties.

\begin{table}[h]
\centering
\small
\caption{Transformation properties. Invariant transformations preserve the answer; equivariant transformations negate it.}
\label{tab:transform-properties}
\begin{tabular}{lll}
\toprule
\textbf{Transformation} & \textbf{Type} & \textbf{Effect on Answer} \\
\midrule
Color adjustments & Invariant & No change \\
Bbox-preserving crop & Invariant & No change \\
Horizontal flip & Equivariant & Negates orientation \\
Relation swap & Equivariant & Negates answer \\
Object order swap & Equivariant & Negates answer \\
\bottomrule
\end{tabular}
\end{table}

\paragraph{Composition Rule.} When multiple transformations are applied simultaneously, their effects compose according to a simple rule: each equivariant transformation contributes one negation, and an even number of negations cancels out. For example, applying both a horizontal flip and a relation swap results in two negations, so the final answer matches the original. Applying a flip, a relation swap, and an object order swap results in three negations, so the final answer is negated. This composition rule allows the consistency verifier to determine the expected answer relationship for any combination of transformations.

\paragraph{Sampling Strategy.} In practice, we use all transformations for all tasks. Each transformation is applied independently with probability 0.5, and the sampled transformations are composed. This stochastic composition ensures diverse augmentation during training while maintaining the ability to compute the expected answer relationship via the composition rule.

With the filtered object pairs, question templates, and transformation definitions in place, we have all the building blocks needed to generate paired prompts at training time. During each training step, transformations are sampled and applied on-the-fly to create the original and augmented prompt pairs used by the consistency verifier.

\subsection{Example Prompt Pairs}
\label{app:example-prompts}

Figures~\ref{fig:example-depth}--\ref{fig:example-distance} show example prompt pairs for each of the four spatial reasoning tasks. Each figure displays an original prompt (left) alongside an augmented prompt (right) created by applying various image and text transformations. The examples demonstrate different transformation combinations: some use only invariant transforms (where answers should match), while others include equivariant transforms that predictably change the answer. The prompts show the object descriptions and questions presented to the model; the colored markers (boxes or dots) indicate the referenced objects.

% Depth task (closer_further) - no flip, relation swap only
\begin{figure}[h]
\centering
\begin{subfigure}[t]{0.48\textwidth}
    \centering
    \includegraphics[width=\textwidth]{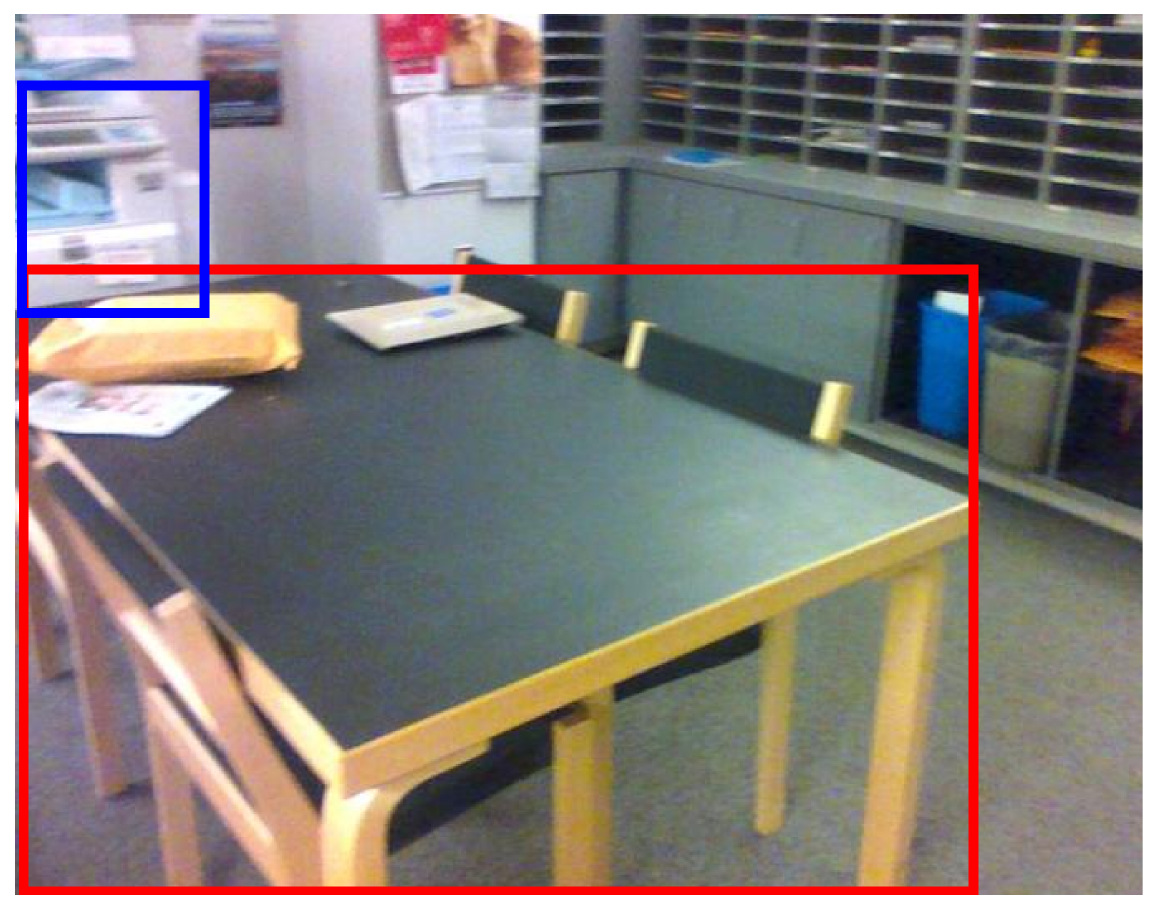}
    \caption*{\small\texttt{- object 1 = "table", highlighted by a red box.}\\
    \texttt{- object 2 = "printer", highlighted by a blue box.}\\
    \texttt{Is object 1 closer to the camera than object 2?}\\
    \textbf{Answer: True}}
\end{subfigure}
\hfill
\begin{subfigure}[t]{0.48\textwidth}
    \centering
    \includegraphics[width=\textwidth]{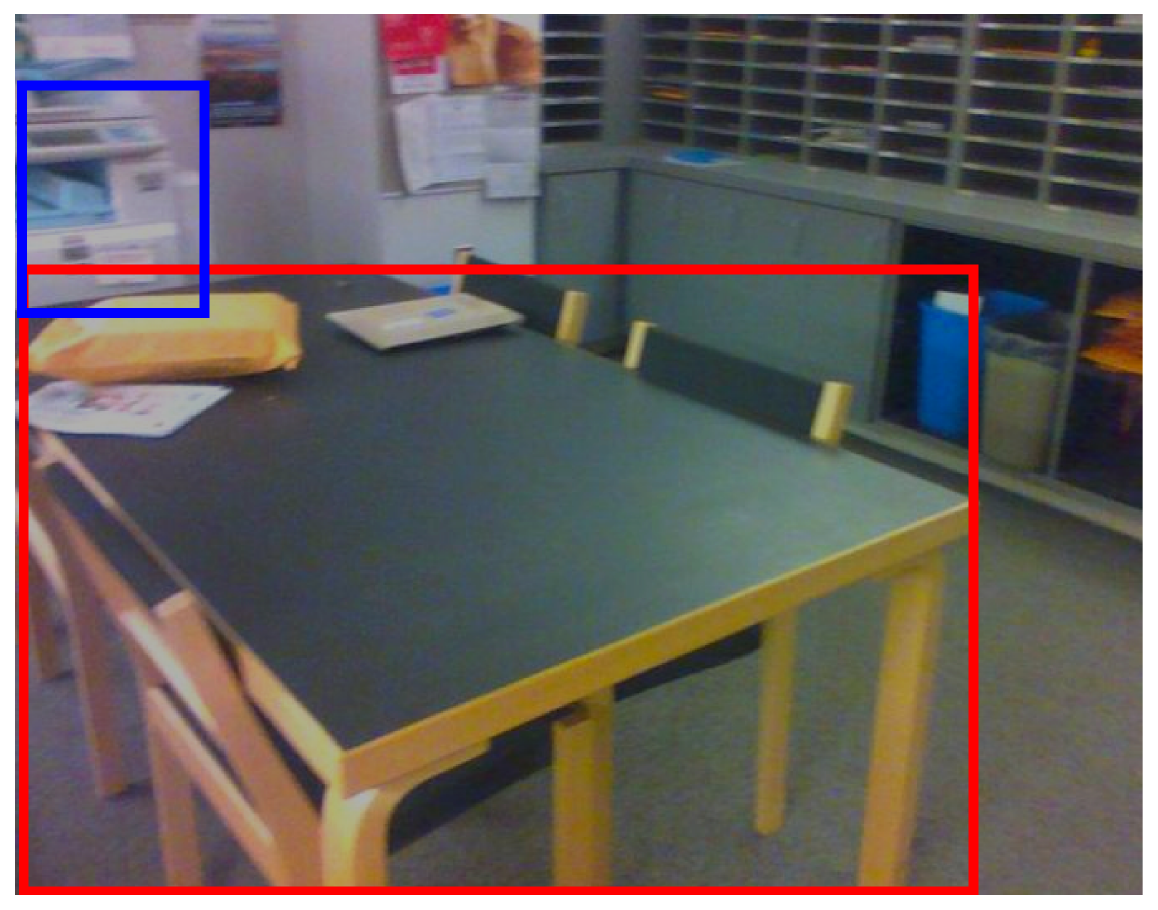}
    \caption*{\small\texttt{- object 1 = "table", highlighted by a red box.}\\
    \texttt{- object 2 = "printer", highlighted by a blue box.}\\
    \texttt{Is object 1 further from the camera than object 2?}\\
    \textbf{Answer: False}}
\end{subfigure}
\caption{Depth task example. The augmented prompt applies color jitter and relation swap (``closer to'' $\to$ ``further from''), but no horizontal flip. Since relation swap is a single equivariant transformation (one negation), the \textbf{answers should differ}.}
\label{fig:example-depth}
\end{figure}

% Orientation task (left_right) - flip + relation swap = 2 negations
\begin{figure}[h]
\centering
\begin{subfigure}[t]{0.48\textwidth}
    \centering
    \includegraphics[width=\textwidth]{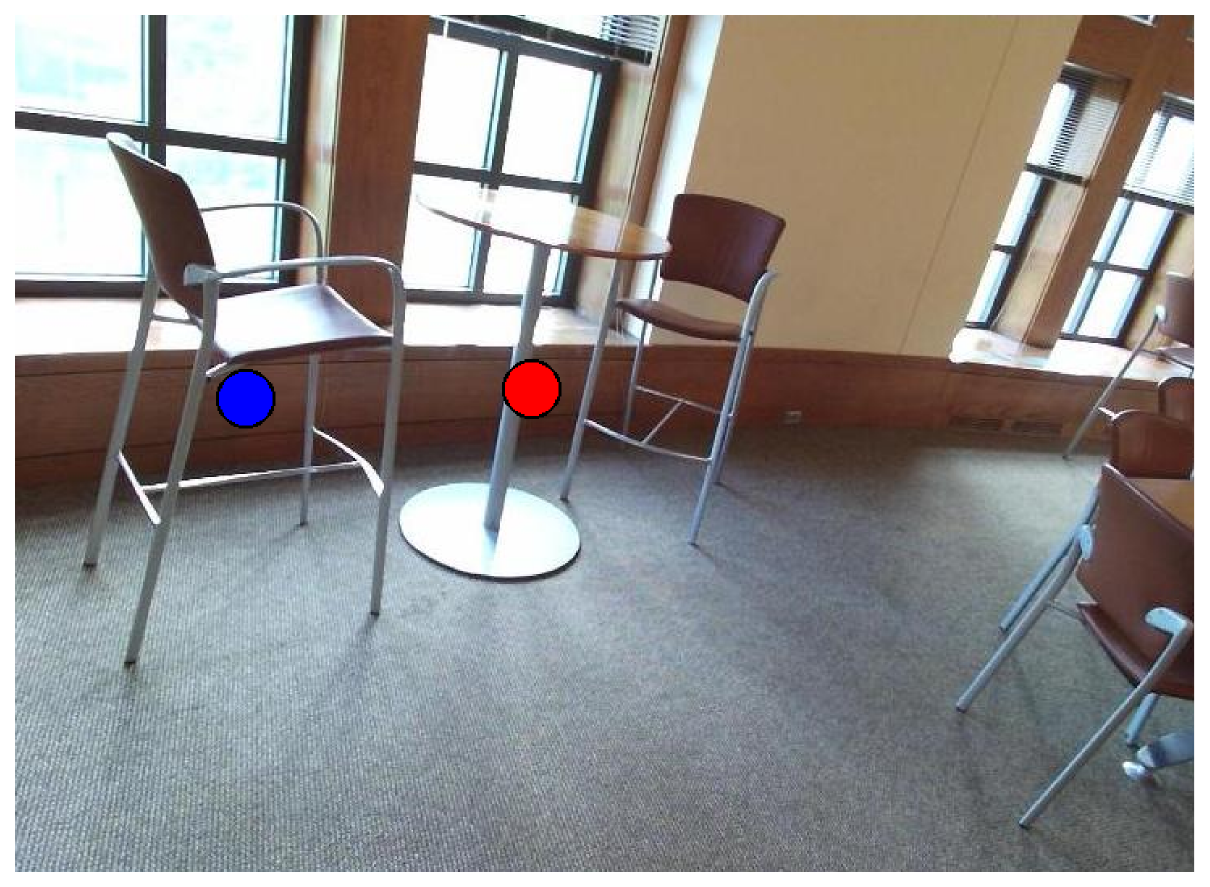}
    \caption*{\small\texttt{- object 1 = "table", marked with a red dot.}\\
    \texttt{- object 2 = "chair", marked with a blue dot.}\\
    \texttt{Is object 1 to the left of object 2?}\\
    \textbf{Answer: False}}
\end{subfigure}
\hfill
\begin{subfigure}[t]{0.48\textwidth}
    \centering
    \includegraphics[width=\textwidth]{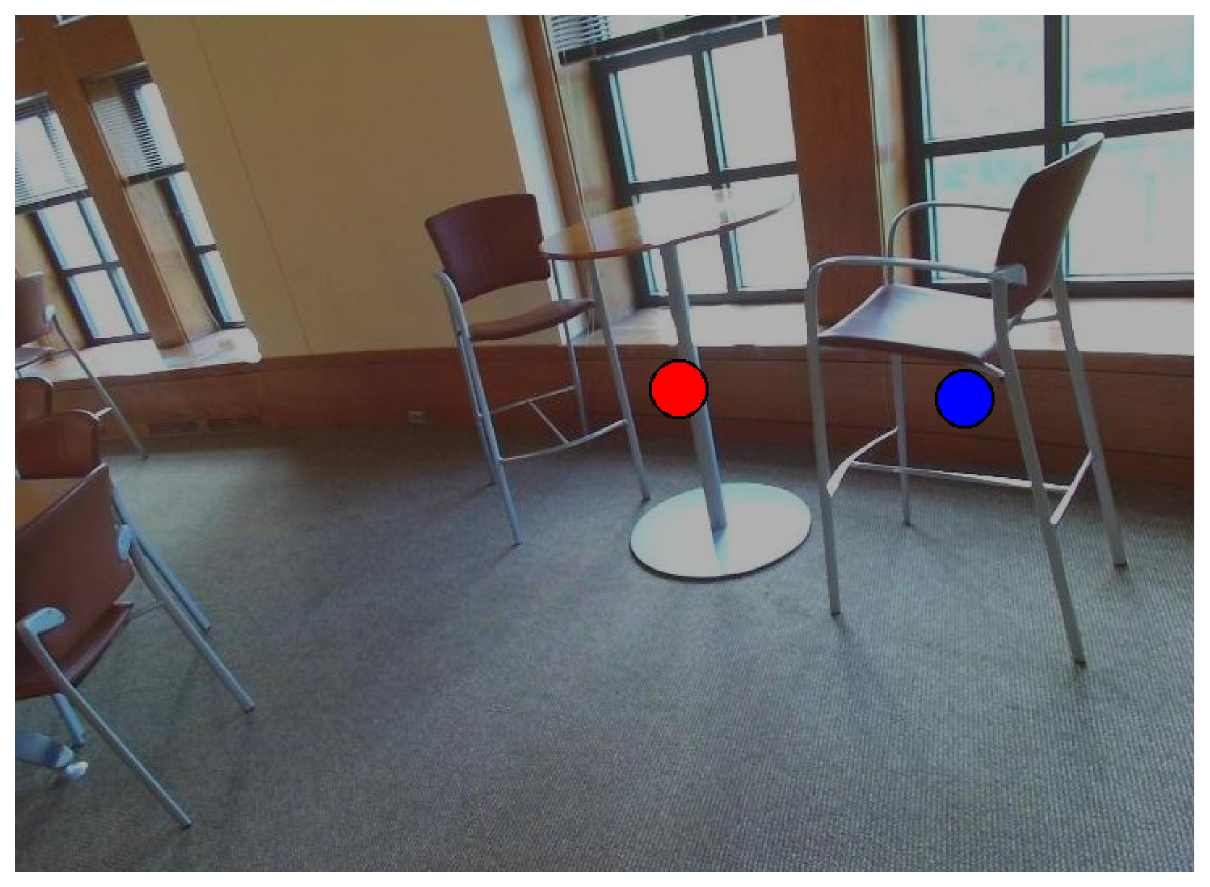}
    \caption*{\small\texttt{- object 1 = "table", marked with a red dot.}\\
    \texttt{- object 2 = "chair", marked with a blue dot.}\\
    \texttt{Is object 1 to the right of object 2?}\\
    \textbf{Answer: False}}
\end{subfigure}
\caption{Orientation task example. The augmented prompt applies horizontal flip, color jitter, and relation swap (``left of'' $\to$ ``right of''). The flip negates the spatial relationship, and the relation swap negates the question---two equivariant transformations that cancel out, so the \textbf{answers should match}.}
\label{fig:example-orientation}
\end{figure}

% Size task (bigger_smaller) - crop + template change, no relation swap
\begin{figure}[h]
\centering
\begin{subfigure}[t]{0.48\textwidth}
    \centering
    \includegraphics[width=\textwidth]{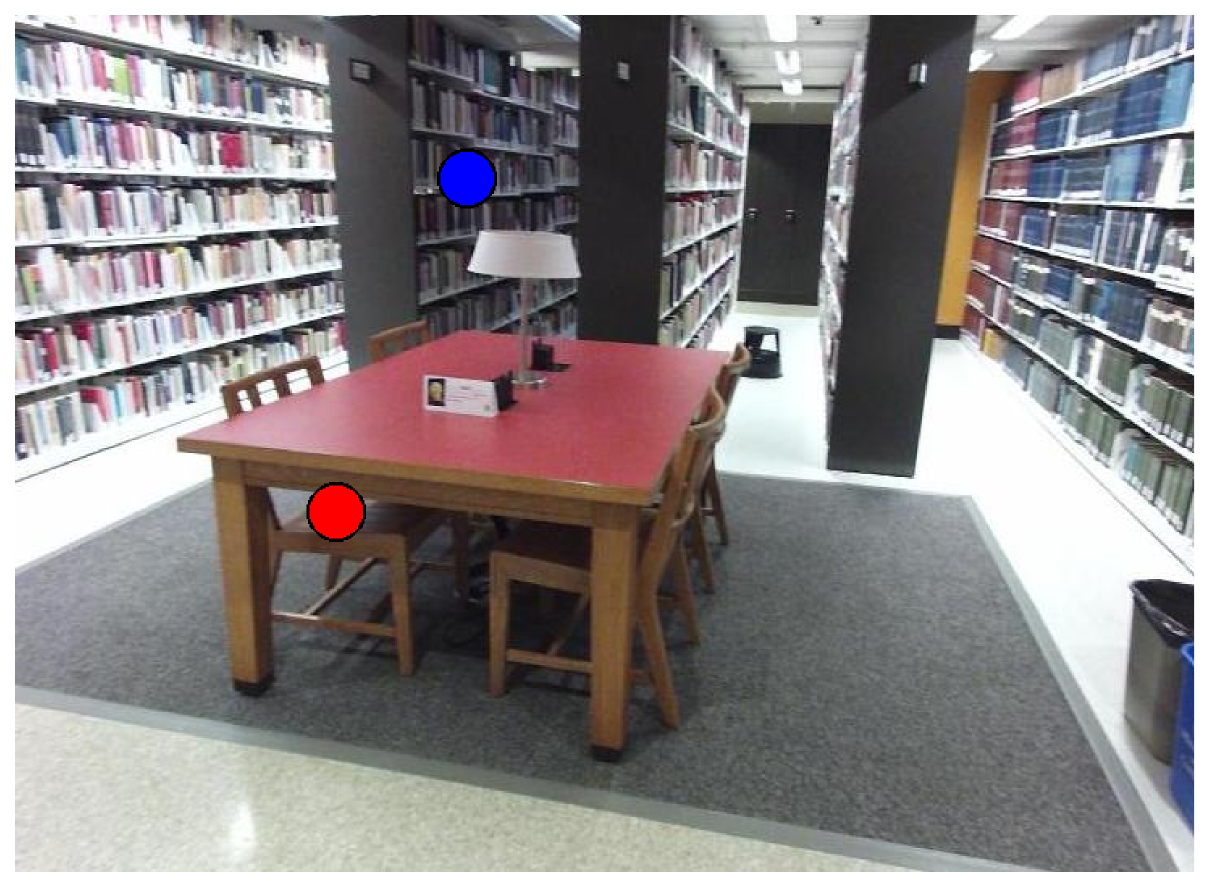}
    \caption*{\small\texttt{- object 1 = "chair", marked with a red dot.}\\
    \texttt{- object 2 = "bookcase", marked with a blue dot.}\\
    \texttt{Is object 1 bigger than object 2?}\\
    \textbf{Answer: False}}
\end{subfigure}
\hfill
\begin{subfigure}[t]{0.48\textwidth}
    \centering
    \includegraphics[width=0.72\textwidth]{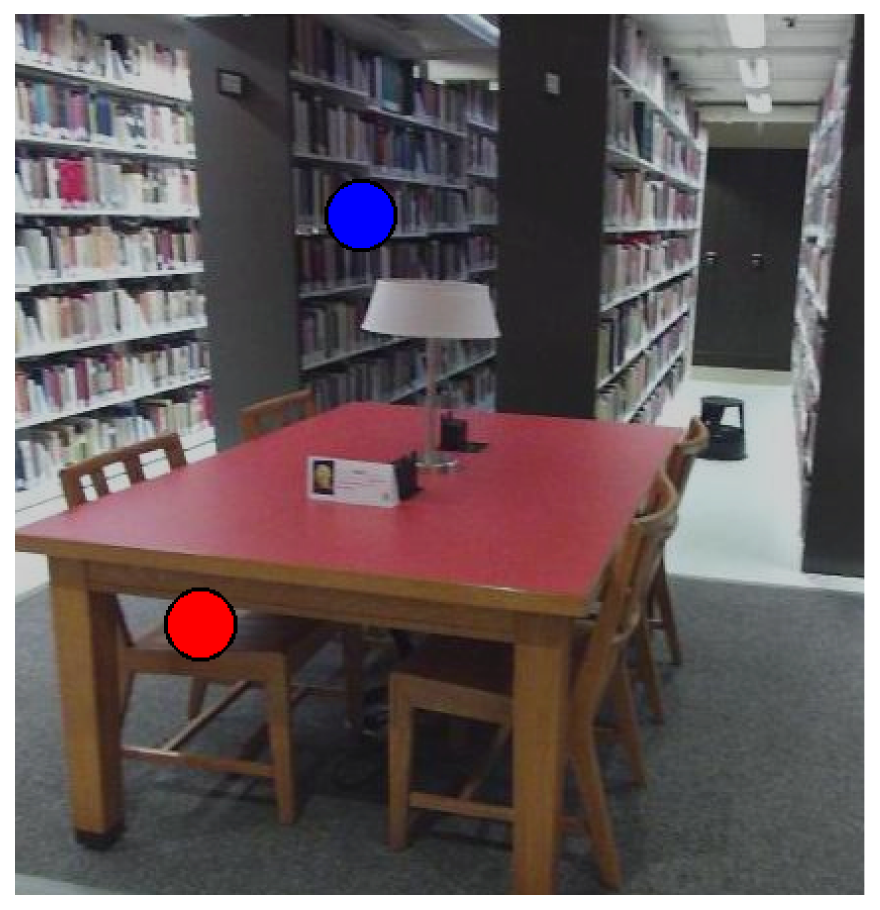}
    \caption*{\small\texttt{- object 1 = "chair", marked with a red dot.}\\
    \texttt{- object 2 = "bookcase", marked with a blue dot.}\\
    \texttt{Compared to object 2, is object 1 bigger than?}\\
    \textbf{Answer: False}}
\end{subfigure}
\caption{Size task example. The augmented prompt applies an aggressive crop (50--60\% scale), color jitter, and a different question template---but no relation swap. All transformations are invariant (zero negations), so the \textbf{answers should match}.}
\label{fig:example-size}
\end{figure}

% Relative distance task - flip + relation swap, but flip is invariant for distance
\begin{figure}[h]
\centering
\begin{subfigure}[t]{0.48\textwidth}
    \centering
    \includegraphics[width=\textwidth]{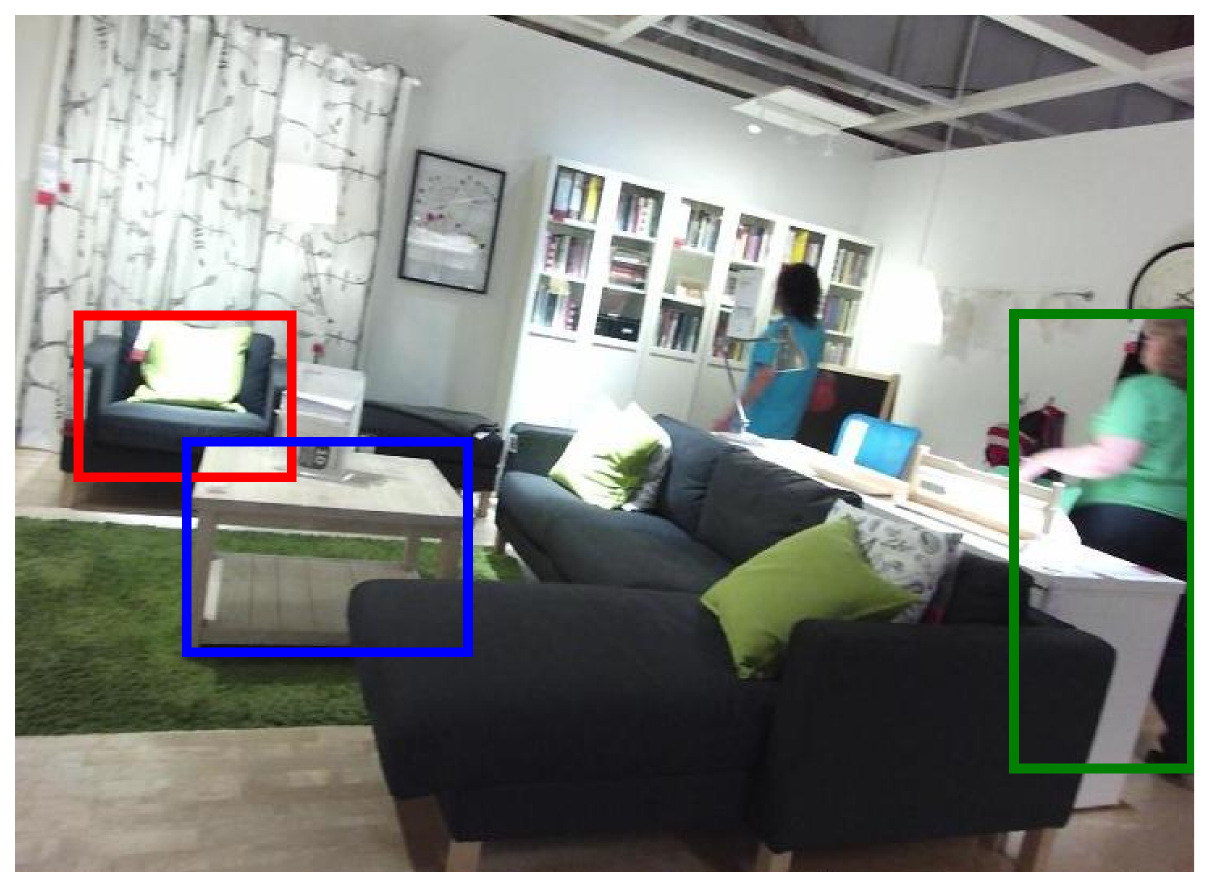}
    \caption*{\small\texttt{- object 1 = "chair", highlighted by a red box.}\\
    \texttt{- object 2 = "table", highlighted by a blue box.}\\
    \texttt{- object 3 = "person", highlighted by a green box.}\\
    \texttt{Is object 2 closer to object 1 than object 3?}\\
    \textbf{Answer: True}}
\end{subfigure}
\hfill
\begin{subfigure}[t]{0.48\textwidth}
    \centering
    \includegraphics[width=\textwidth]{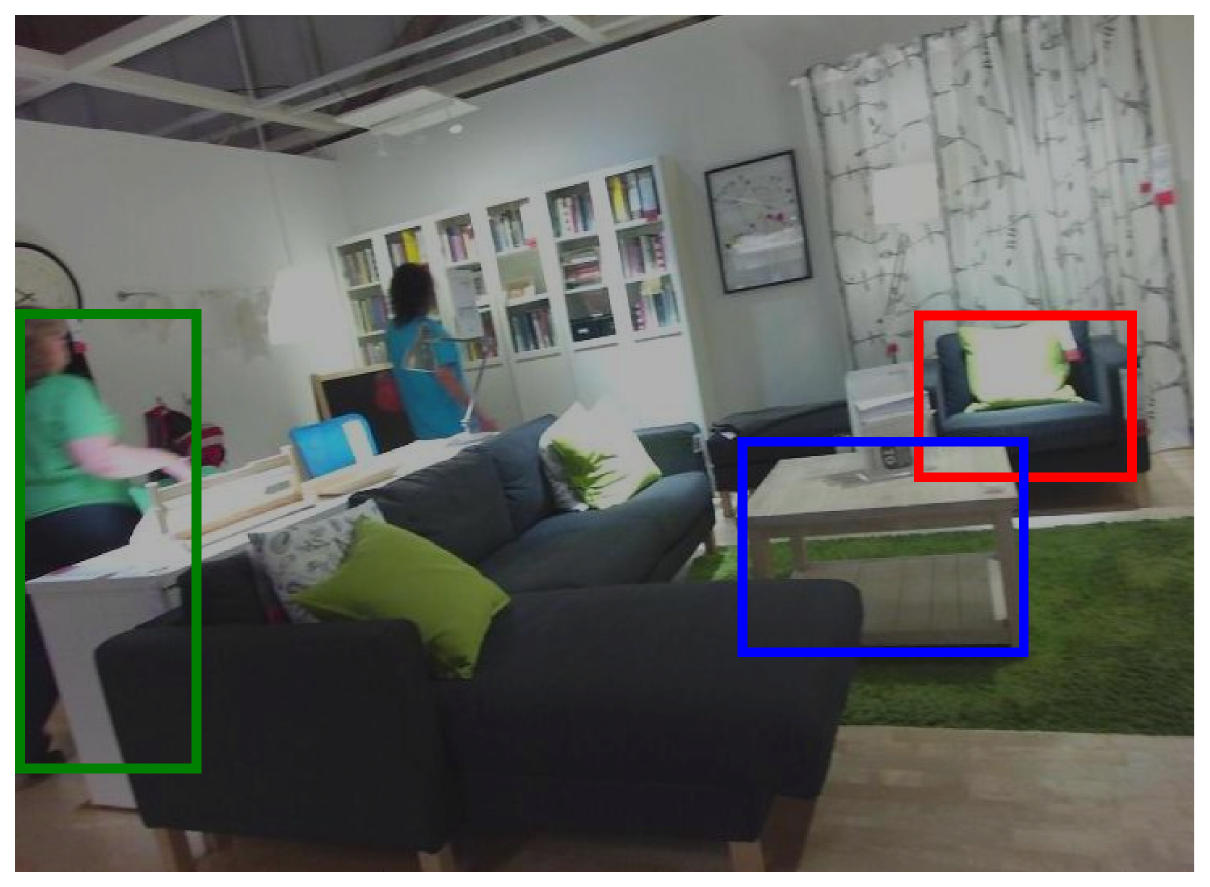}
    \caption*{\small\texttt{- object 1 = "chair", highlighted by a red box.}\\
    \texttt{- object 2 = "table", highlighted by a blue box.}\\
    \texttt{- object 3 = "person", highlighted by a green box.}\\
    \texttt{Is object 2 further from object 1 than object 3?}\\
    \textbf{Answer: False}}
\end{subfigure}
\caption{Relative distance task example (triplet). The augmented prompt applies horizontal flip, color jitter, and relation swap (``closer to'' $\to$ ``further from''). The flip does not affect inter-object distances, so only the relation swap contributes a negation---the \textbf{answers should differ}.}
\label{fig:example-distance}
\end{figure}

\subsection{KITTI Example Prompt Pairs}

Figures~\ref{fig:kitti-depth}--\ref{fig:kitti-distance} show example prompt pairs from the KITTI dataset (outdoor driving scenes). KITTI images feature sparser layouts with objects at greater depths (5--100\,m), predominantly containing vehicles (cars, vans, trucks) and pedestrians. The same transformation strategies apply as in SUN RGB-D.

% KITTI Depth task
\begin{figure}[h]
\centering
\begin{subfigure}[t]{0.48\textwidth}
    \centering
    \includegraphics[width=\textwidth]{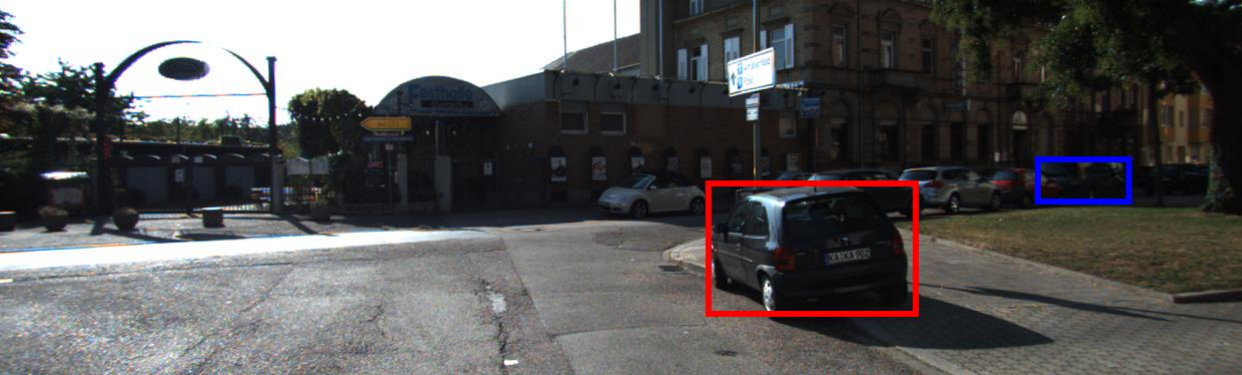}
    \caption*{\small\texttt{- object 1 = "car", highlighted by a red box.}\\
    \texttt{- object 2 = "car", highlighted by a blue box.}\\
    \texttt{Is object 1 closer to the camera than object 2?}\\
    \textbf{Answer: True}}
\end{subfigure}
\hfill
\begin{subfigure}[t]{0.48\textwidth}
    \centering
    \includegraphics[width=\textwidth]{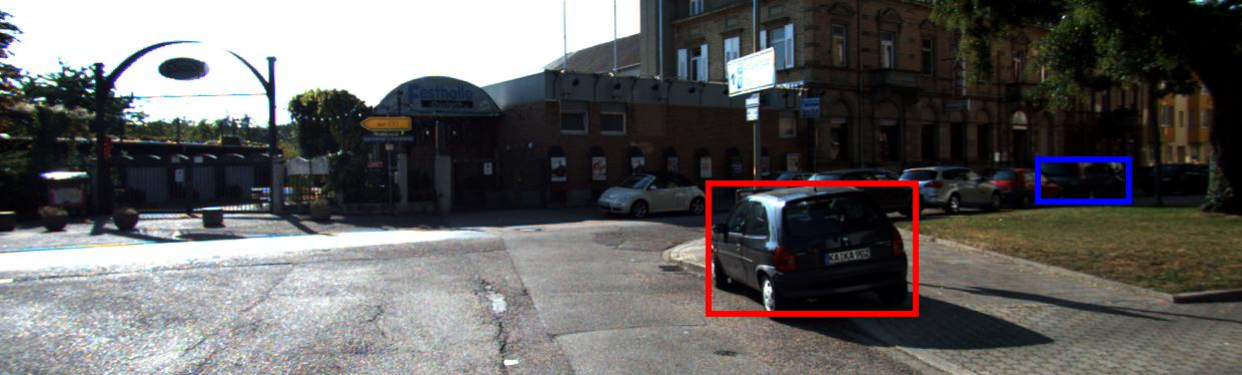}
    \caption*{\small\texttt{- object 1 = "car", highlighted by a red box.}\\
    \texttt{- object 2 = "car", highlighted by a blue box.}\\
    \texttt{Is object 1 further from the camera than object 2?}\\
    \textbf{Answer: False}}
\end{subfigure}
\caption{KITTI depth task example. The augmented prompt applies color jitter and relation swap (``closer to'' $\to$ ``further from''), but no horizontal flip. Since relation swap is a single equivariant transformation (one negation), the \textbf{answers should differ}.}
\label{fig:kitti-depth}
\end{figure}

% KITTI Orientation task
\begin{figure}[h]
\centering
\begin{subfigure}[t]{0.48\textwidth}
    \centering
    \includegraphics[width=\textwidth]{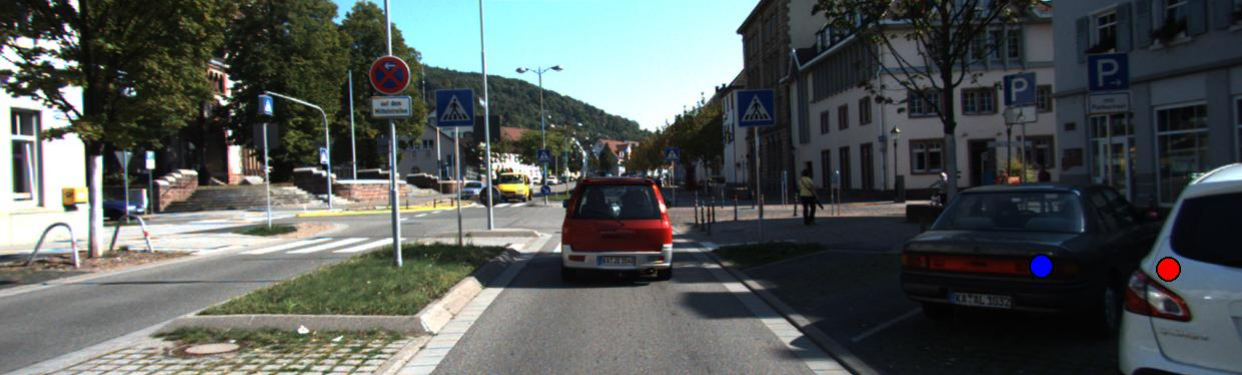}
    \caption*{\small\texttt{- object 1 = "car", marked with a red dot.}\\
    \texttt{- object 2 = "car", marked with a blue dot.}\\
    \texttt{Is object 1 to the left of object 2?}\\
    \textbf{Answer: False}}
\end{subfigure}
\hfill
\begin{subfigure}[t]{0.48\textwidth}
    \centering
    \includegraphics[width=\textwidth]{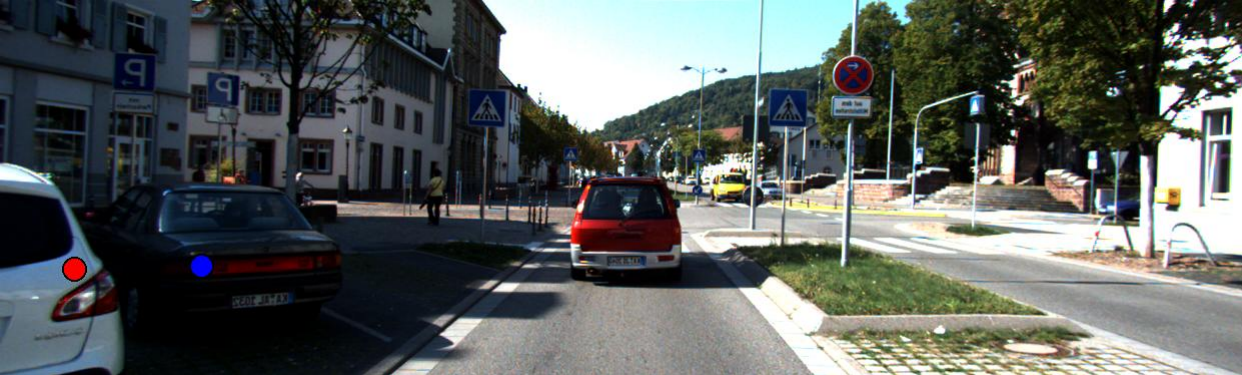}
    \caption*{\small\texttt{- object 1 = "car", marked with a red dot.}\\
    \texttt{- object 2 = "car", marked with a blue dot.}\\
    \texttt{Is object 1 to the right of object 2?}\\
    \textbf{Answer: False}}
\end{subfigure}
\caption{KITTI orientation task example. The augmented prompt applies horizontal flip, color jitter, and relation swap (``left of'' $\to$ ``right of''). The flip negates the spatial relationship, and the relation swap negates the question---two equivariant transformations that cancel out, so the \textbf{answers should match}.}
\label{fig:kitti-orientation}
\end{figure}

% KITTI Size task
\begin{figure}[h]
\centering
\begin{subfigure}[t]{0.48\textwidth}
    \centering
    \includegraphics[width=\textwidth]{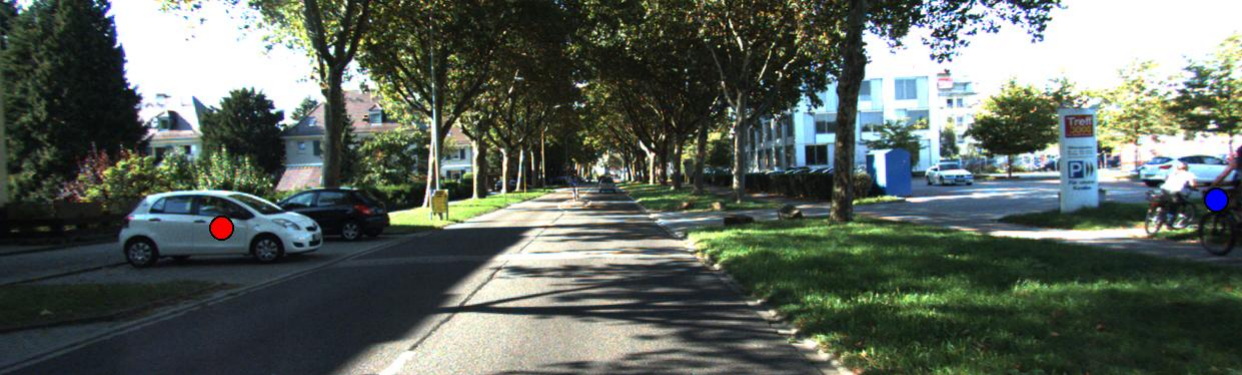}
    \caption*{\small\texttt{- object 1 = "car", marked with a red dot.}\\
    \texttt{- object 2 = "cyclist", marked with a blue dot.}\\
    \texttt{Is object 1 bigger than object 2?}\\
    \textbf{Answer: False}}
\end{subfigure}
\hfill
\begin{subfigure}[t]{0.48\textwidth}
    \centering
    \includegraphics[width=0.72\textwidth]{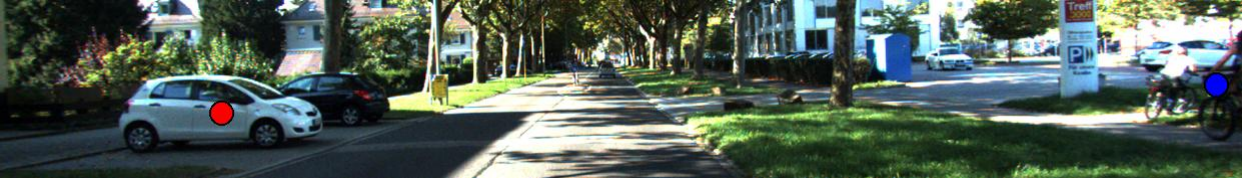}
    \caption*{\small\texttt{- object 1 = "car", marked with a red dot.}\\
    \texttt{- object 2 = "cyclist", marked with a blue dot.}\\
    \texttt{Compared to object 2, is object 1 larger?}\\
    \textbf{Answer: False}}
\end{subfigure}
\caption{KITTI size task example. The augmented prompt applies a bounding-box-aware crop, color jitter, and a different question template---but no relation swap. All transformations are invariant (zero negations), so the \textbf{answers should match}.}
\label{fig:kitti-size}
\end{figure}

% KITTI Relative distance task
\begin{figure}[h]
\centering
\begin{subfigure}[t]{0.48\textwidth}
    \centering
    \includegraphics[width=\textwidth]{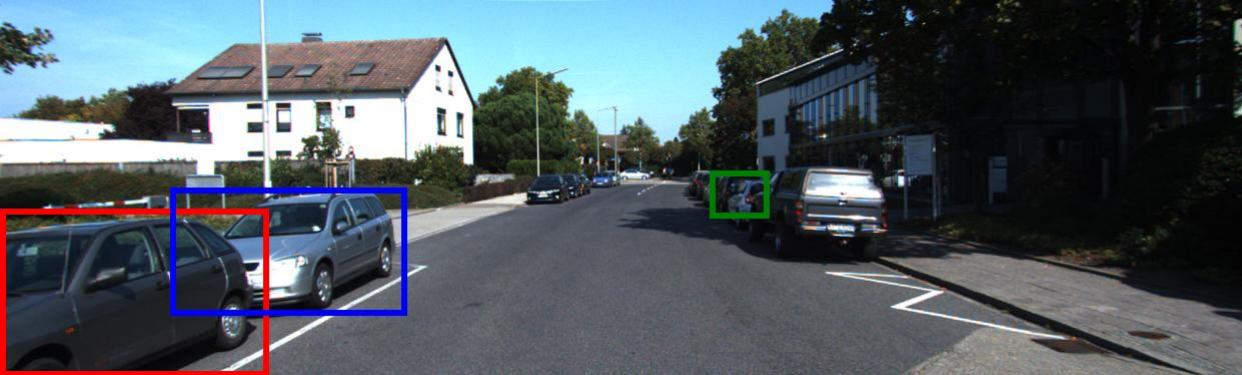}
    \caption*{\small\texttt{- object 1 = "car", highlighted by a red box.}\\
    \texttt{- object 2 = "car", highlighted by a blue box.}\\
    \texttt{- object 3 = "car", highlighted by a green box.}\\
    \texttt{Is object 2 closer to object 1 than object 3?}\\
    \textbf{Answer: True}}
\end{subfigure}
\hfill
\begin{subfigure}[t]{0.48\textwidth}
    \centering
    \includegraphics[width=\textwidth]{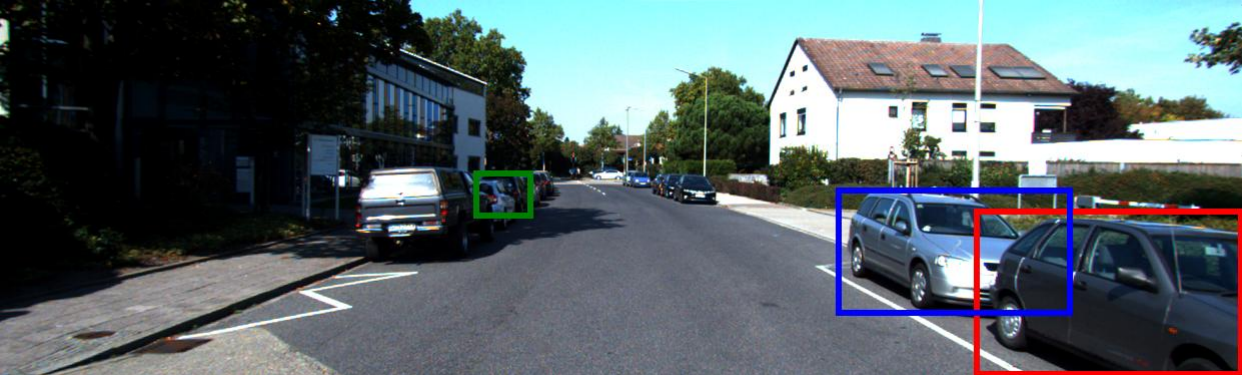}
    \caption*{\small\texttt{- object 1 = "car", highlighted by a red box.}\\
    \texttt{- object 2 = "car", highlighted by a blue box.}\\
    \texttt{- object 3 = "car", highlighted by a green box.}\\
    \texttt{Is object 2 further from object 1 than object 3?}\\
    \textbf{Answer: False}}
\end{subfigure}
\caption{KITTI relative distance task example (triplet). The augmented prompt applies horizontal flip, color jitter, and relation swap (``closer to'' $\to$ ``further from''). The flip does not affect inter-object distances, so only the relation swap contributes a negation---the \textbf{answers should differ}.}
\label{fig:kitti-distance}
\end{figure}

\subsection{Numeric Task Example Prompt Pairs}
\label{app:example-prompts-numeric}

Figures~\ref{fig:example-counting}--\ref{fig:example-abs-distance} show example prompt pairs for the two numeric tasks introduced in \cref{sec:experiments}. As discussed there, the answer (an integer count or a metric distance) is invariant under every transformation we apply, so the consistency verifier expects the original and augmented predictions to be \emph{equal}---there is no equivariant case to consider.

% Counting task -- crop + template change (all invariant)
\begin{figure}[h]
\centering
\begin{subfigure}[t]{0.48\textwidth}
    \centering
    \includegraphics[width=\textwidth]{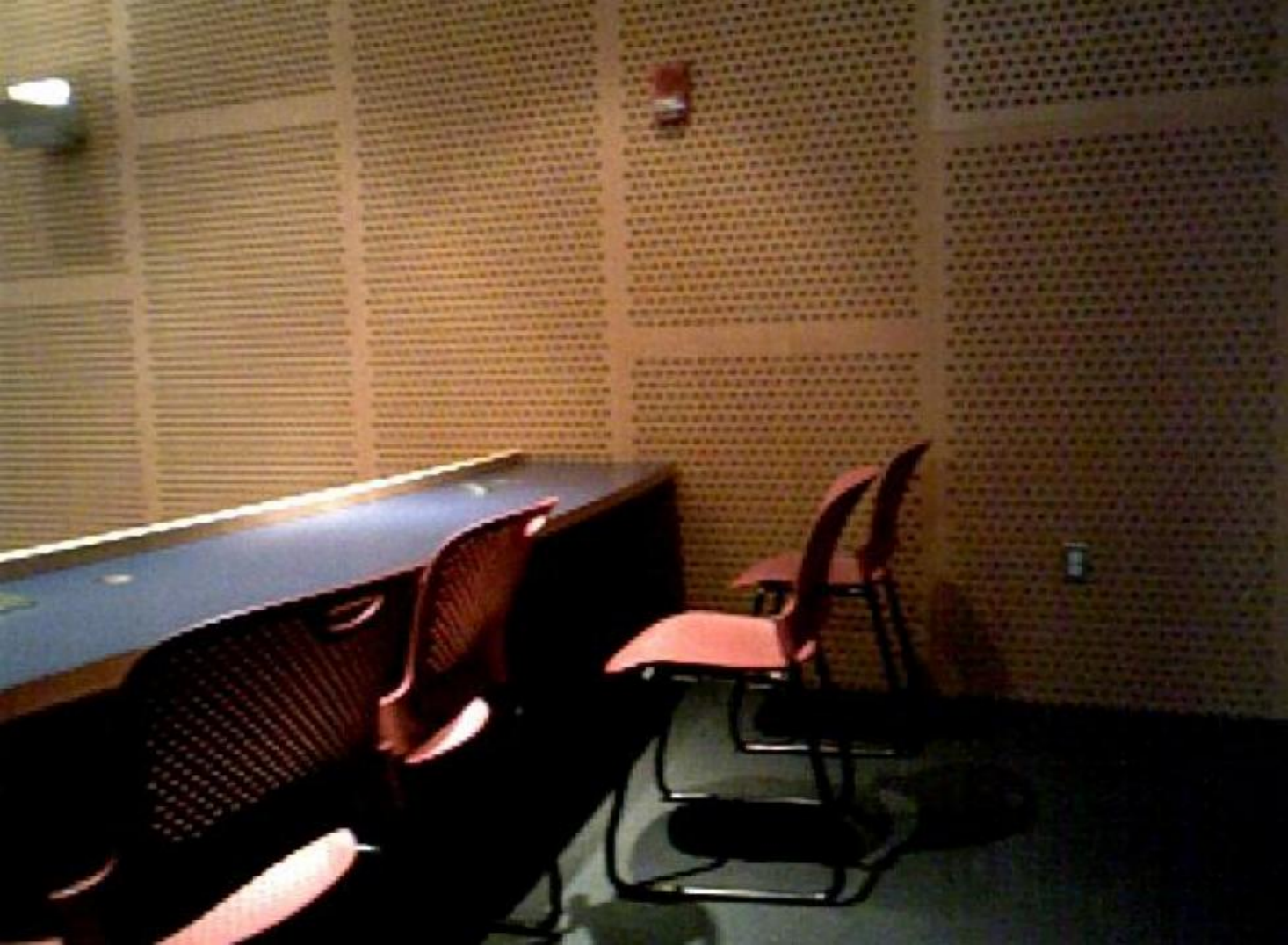}
    \caption*{\small\texttt{How many chairs are visible in the image?}\\
    \textbf{Answer: 3}}
\end{subfigure}
\hfill
\begin{subfigure}[t]{0.48\textwidth}
    \centering
    \includegraphics[width=\textwidth]{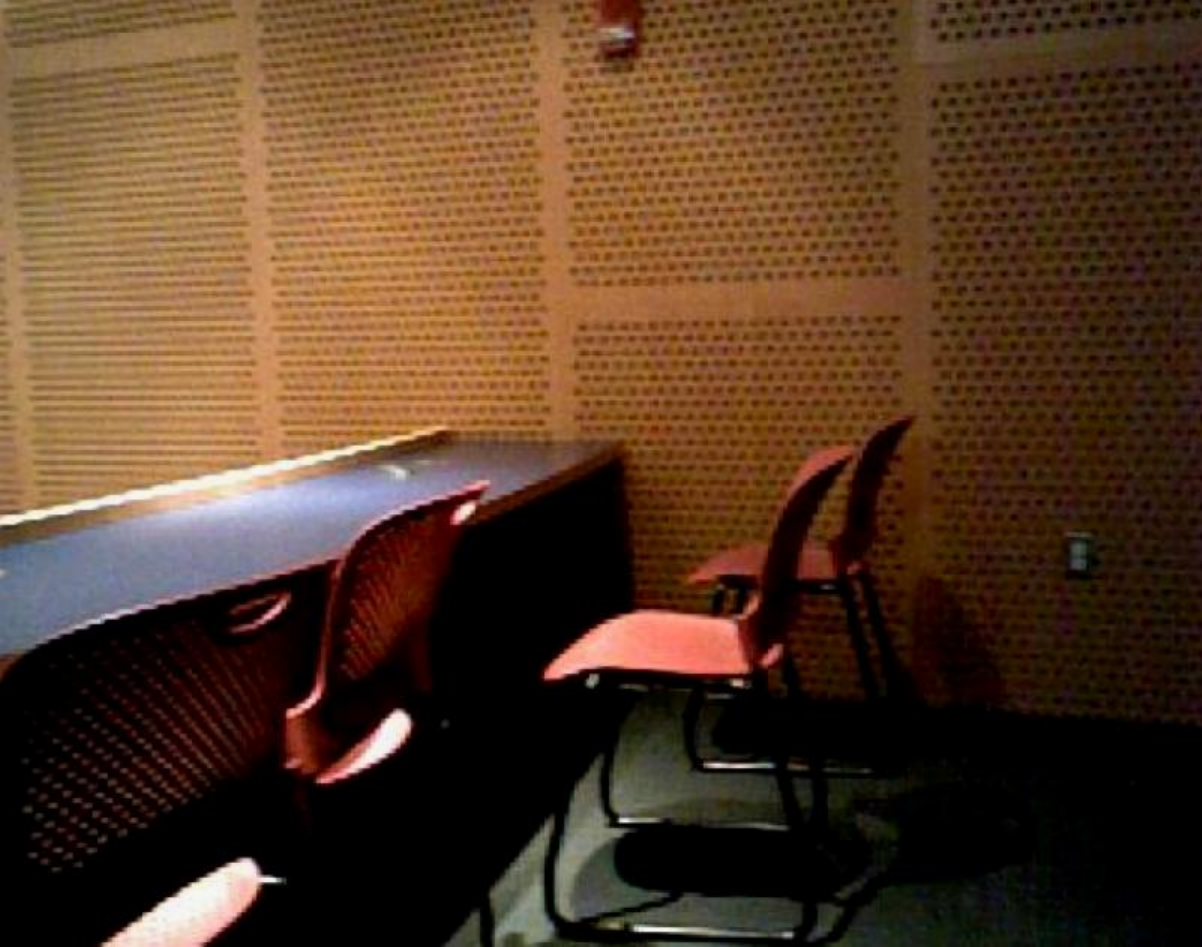}
    \caption*{\small\texttt{In the image, how many chairs can you see?}\\
    \textbf{Answer: 3}}
\end{subfigure}
\caption{Counting task example. The augmented prompt applies an object-preserving crop, color jitter, and template resampling. All transformations are invariant for counting (the number of objects of a given class is unchanged), so the \textbf{answers should match}.}
\label{fig:example-counting}
\end{figure}

% Absolute distance task -- horizontal flip + crop + template (all invariant for metric distance)
\begin{figure}[h]
\centering
\begin{subfigure}[t]{0.48\textwidth}
    \centering
    \includegraphics[width=\textwidth]{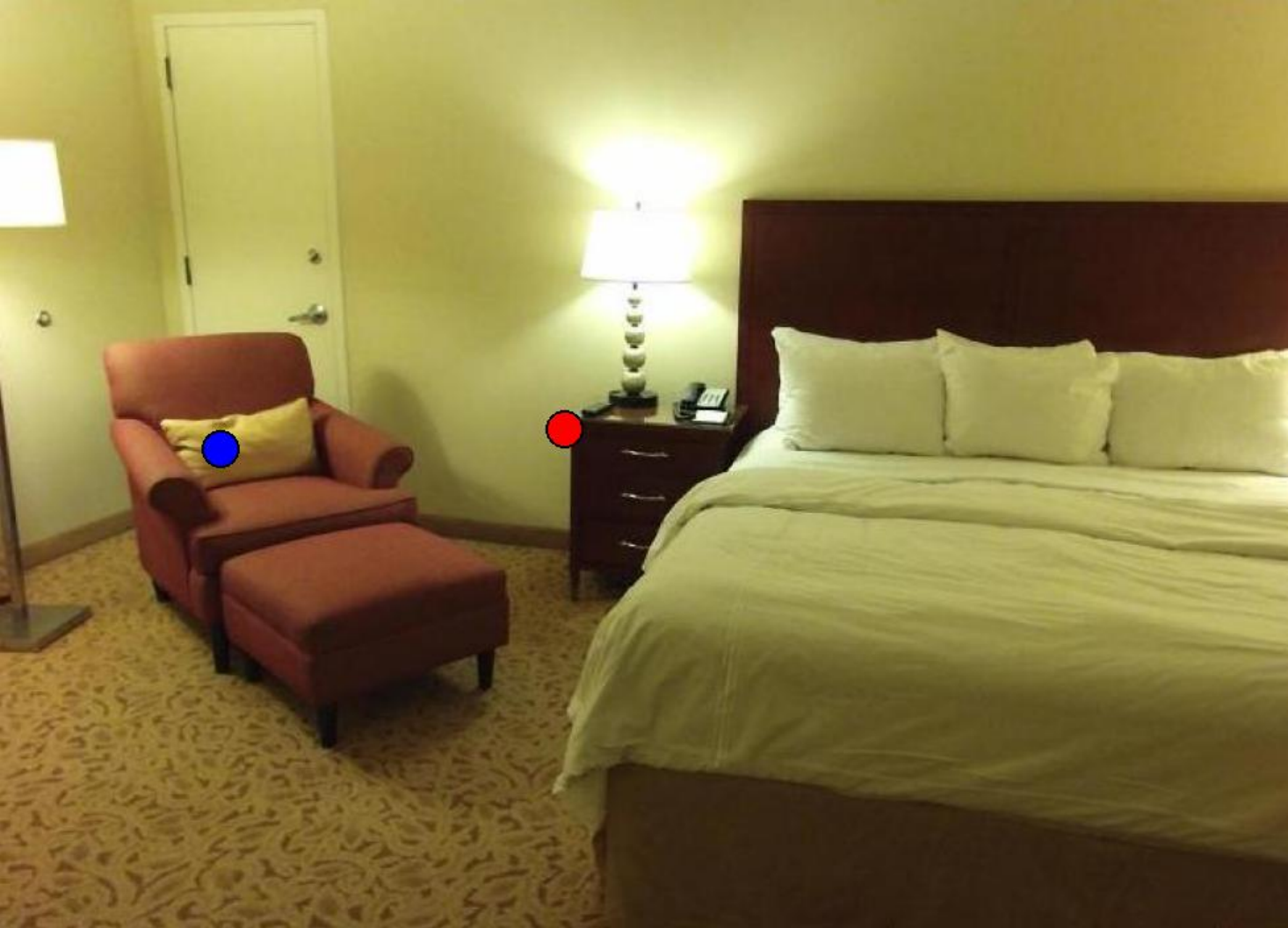}
    \caption*{\small\texttt{- object 1 = "table", marked with a red dot.}\\
    \texttt{- object 2 = "chair", marked with a blue dot.}\\
    \texttt{What is the distance between object 1 and object 2 in meters?}\\
    \textbf{Answer: 1.4}}
\end{subfigure}
\hfill
\begin{subfigure}[t]{0.48\textwidth}
    \centering
    \includegraphics[width=\textwidth]{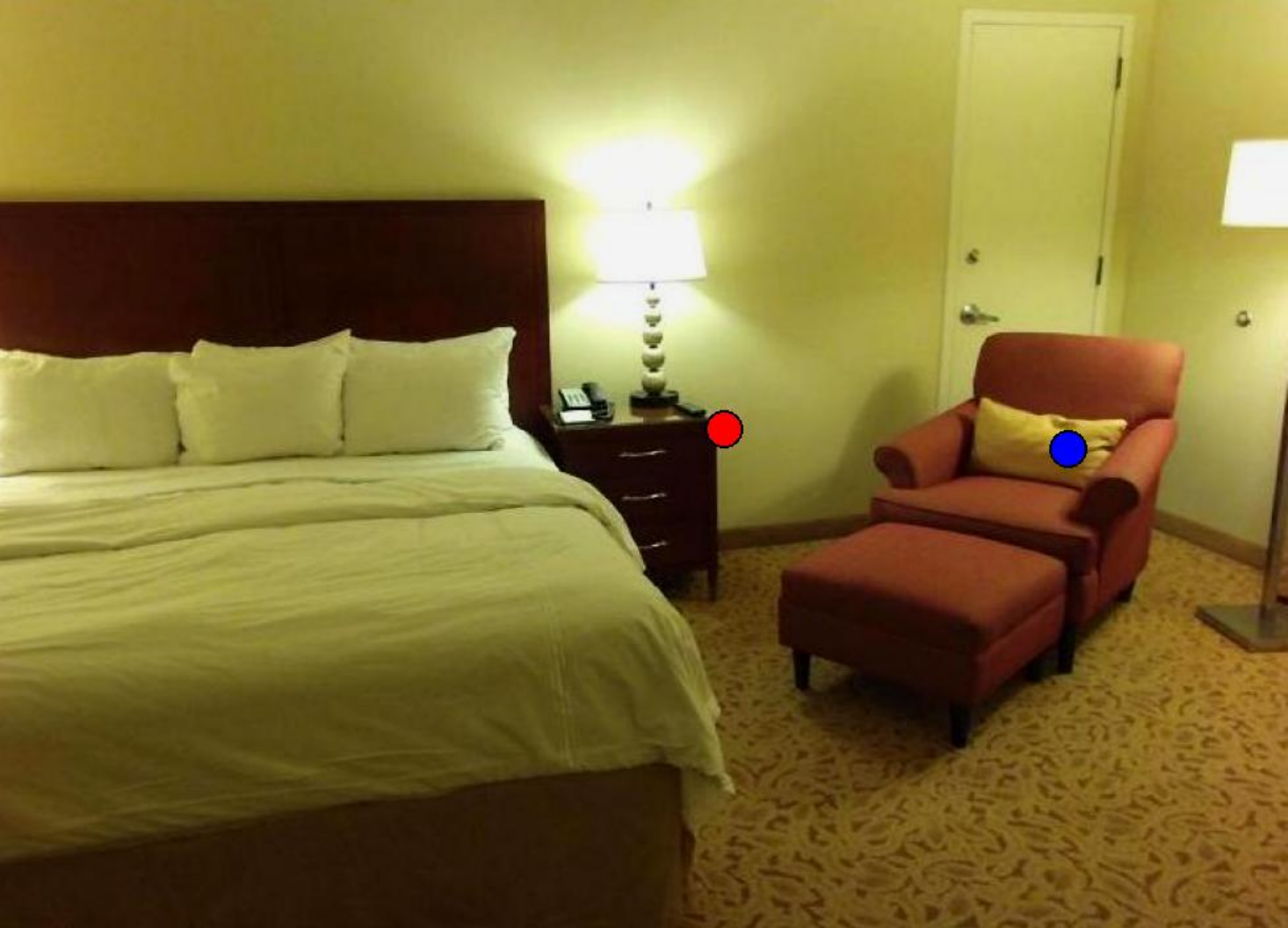}
    \caption*{\small\texttt{- object 1 = "table", marked with a red dot.}\\
    \texttt{- object 2 = "chair", marked with a blue dot.}\\
    \texttt{In meters, how far apart are object 1 and object 2?}\\
    \textbf{Answer: 1.4}}
\end{subfigure}
\caption{Absolute distance task example. The augmented prompt applies horizontal flip, color jitter, and template resampling. Mirroring and visual jitter leave the 3D distance between two objects unchanged and the question paraphrasing keeps its meaning, so the \textbf{answers should match}.}
\label{fig:example-abs-distance}
\end{figure}

\end{document}